\title{\Large\bf Online learning with exponential weights in metric spaces}
\author{Quentin Paris\footnote{HSE University, Faculty of Computer Science, Moscow, Russia. This work has been funded by the  Russian Academic Excellence Project '5-100'. Email:\url{qparis@hse.ru}}}
\date{}
\numberwithin{equation}{section}
\newtheorem{thm}{Theorem}[section]
\newtheorem{pro}[thm]{Proposition}
\newtheorem{lem}[thm]{Lemma}
\newtheorem{cor}[thm]{Corollary}
\newtheorem{defi}[thm]{Definition}
\newtheorem{rem}[thm]{Remark}
\newtheorem{exm}[thm]{Example}
\newcommand{\esp}{\mathbb{E}}
\newcommand{\R}{\mathbb R}
\newcommand{\e}{\varepsilon}
\begin{document}

\maketitle
\begin{abstract}
 This paper addresses the problem of online learning in metric spaces using exponential weights. We extend the analysis of the exponentially weighted average forecaster, traditionally studied in a Euclidean settings, to a more abstract framework. Our results rely on the notion of barycenters, a suitable version of Jensen's inequality and a synthetic notion of lower curvature bound in metric spaces known as the measure contraction property. We also adapt the online-to-batch conversion principle to apply our results to a statistical learning framework. 
\end{abstract}

\section{Introduction}
The problem of online convex optimization~\citep{CesLug06,Sha12,Haz16} has become a strandard model of online learning. Its simple and flexible formulation as a repeated game, devoid of distributional assumptions on the data, has proven effective in framing theoretically a number of online prediction tasks including online recommendation systems, online portfolio selection or network routing problems. Traditionally studied in the context of Euclidean spaces, less seems to be known when the decision space is a more general metric space, with potentially no linear structure. In this paper, we extend the analysis of the exponentially weighted average (\textsc{ewa}) forecaster to some geodesic metric spaces.

Motivations for this level of generality arise, for example, when the decision space is a smooth manifold. Such a scenario is routinely encountered in directional or shape statistics~\citep{Mar99} where observations take values in spheres, projective spaces or shape spaces. Computer vision and medical imaging deal with spaces of transformations that carry a Lie group structure~\citep{You19}. The space of real symmetric positive-definite matrices with Bures metric~\citep{Bur69}, Log-Euclidean metric~\citep{AFPA2007} or Log-Cholesky metric~\citep{Lin2019} have found important applications in quantum information theory or diffusion tensor imaging. Recent works have also demonstrated the use of hyperbolic spaces for applications in natural language processing~\citep{NiKi17}. Other situations of interest involve spaces which cannot be endowed with a formal Riemannian structure such as the $2$-Wasserstein space~\citep{Vil09,San15}, the Billera-Holmes-Vogtmann space of phylogenetic trees~\citep{bhv} or the Gromov-Wasserstein space~\citep{Mem11,Stu12}. For a detailed survey on data analysis in non-standard spaces, we also refer the reader to~\cite{HucElt20}. 

All the spaces mentioned above share the characteristic of being geodesic spaces~\citep{BriHaf99,BurBurIva01,AleKapPet19_invitation,AleKapPet19}, a structure that turns out to be rich enough to formulate and study the theoretical properties of a generalized version of the \textsc{ewa} forecaster, expanding the range of potential applications for online prediction strategies. 

From a technical pespective, the definition of our learning strategy, called the exponentially weighted barycentric (\textsc{ewb}) forecaster, relies on the central notion of barycenters of probability measures. Its analysis is then based on an adapted version of Jensen's inequality along with a general geometric property referred to as the measure contraction property that may be understood as a general definition of a lower Ricci curvature bound, in spaces that do not have a formal Riemannian structure.  

The use the measure contraction property in the Euclidean setting, can be traced back to \cite{BluKal99} that use the scale invariance of the Lebesgue measure to analyse the performance of the universal rebalanced porfolio algorithm introduced by
\cite{Cov91}. Later, the same argument was leveraged by \cite{Hazan06} to show that the \textsc{ewa} forecaster achieves logarithmic regret for general exponentially concave losses in Euclidean spaces. The argument we develop builds upon these original ideas, extending them to a more abstract framework.

The paper is organized as follows. Section \ref{sec:prelim} gathers basic definitions and necessary geometric background for our main results presented in Section \ref{sec:results}.
Proofs are reported to Section \ref{sec:proofs}. 
\section{Preliminaries}
\label{sec:prelim}
\subsection{Online optimization and the {\normalfont\textsc{ewa}} forecaster}
\label{subsec:oco}
We first recall the protocol of online optimization as well as the construction of the classical \textsc{ewa} forecaster and refer the reader to~\citet{CesLug06,Sha12} or~\citet{Haz16} for more details. 

Consider a decision space $M$ and a set $\mathcal L$ of loss functions $\ell:M\to \R$. At each round $t\ge 1$, a player has to choose to a point $x_t$ in $M$. After the player commits to their choice, the environment reveals a loss function $\ell_t\in\mathcal L$, the player incurs loss $\ell_t(x_t)$ and moves on to round $t+1$. The goal of the player is to minimize their cumulative loss over time. A traditional performance measure after $n$ rounds of the game is the regret $R_n$ that compares the cumulative loss of the player to the cumulative loss of the best fixed point in hindsight, i.e.,
\[R_n=\sum_{t=1}^n\ell_t(x_t)-\min_{x\in M}\sum_{t=1}^n\ell_t(x).\]

In this setting, the decision set $M$ is traditionally a convex subset of $\R^p$, for some $p\ge 1$, and a popular prediction strategy is given by the \textsc{ewa} forecaster defined as follows. First, let $m$ be a probability measure on the decision set $M$, encoding prior information\footnote{Whenever $M\subset\R^p$ is a bounded and convex set with non-empty interior, $m$ is traditionally the uniform distribution over $M$}. Then, given a sequence $(\beta_t)_{t\ge 1}$ of positive tuning parameters, the \textsc{ewa} forecaster is defined, for all $t\ge 1$, by
\begin{equation}
\label{eq:ewaclassic}
x_t:=\int_{M}x\,\mathrm{d}m_t(x),
\end{equation}
where $m_1:=m$ and, for all $t\ge 1$,
\begin{equation}
\label{eq:ewaclassicw}
\mathrm{d}m_{t+1}(x):=\frac{e^{-\beta_{t+1} \ell_{t}(x)}}{\int_{M}e^{-\beta_{t+1} \ell_{t}}\,\mathrm{d}m_t}\,\mathrm{d}m_t(x).
\end{equation}
This popular forecaster  naturally emphasizes the role of points that exhibit a small cumulative loss over time and its analysis is simplified by the convenient properties of the exponential function. Since its introduction by~\citet{Vov90} and~\citet{LitWar94}, the \textsc{ewa} forecaster has been analyzed in the Euclidean setting from many perspectives~\citep[see, e.g,][]{CesLug99,BluKal99,Hazan06}. The use of exponential weights has also found many applications in Statistics in the context of aggregation~\citep{Yan04,LeuBar06,Cat07,DalTsy07,DalTsy08,DalTsy09,JudRigTsy08,Alq08,Aud09,DalTsy12a,DalTsy12b}. 

In the Euclidean setting, the linear and convex structure of the decision space $M$ is important for several reasons. First, it allows to make sense of the integral defining $x_t$ in \eqref{eq:ewaclassic}. Second, it is essential to the traditional notion of convexity of losses $\ell\in\mathcal L$ usually invoked in the analysis of $x_t$. As it turns out, a simple adaptation of the construction of $x_t$ given in \eqref{eq:ewaclassic}, using the notion of barycenters, makes sense in an abstract metric space and reduces to the \textsc{ewa} forecaster in the Euclidean setting. The rest of this section reports some definitions and tools that will be used in the sequel to define and study this adaptation.

\subsection{Geodesic spaces and convexity}
Let $(M,d)$ be a metric space. For $\tau>0$, a path $\gamma:[0,\tau]\to M$ is called a geodesic if, for all $0\le s\le t\le \tau$,
\begin{equation}
\label{eq:scalegeod}
d(\gamma(s),\gamma(t))=\frac{t-s}{\tau}d(\gamma(0),\gamma(1)).
\end{equation}
Geodesics can be equivalently defined as constant-speed reparametrizations of length minimizing paths for an appropriate notion of length in metric spaces. 
\begin{defi}
\label{def:gspace}
The space $(M,d)$ is called geodesic if, for every $x,y\in M$, there exists a geodesic $\gamma:[0,1]\to M$ connecting $x$ to $y$, i.e., such that $\gamma(0)=x$ and $\gamma(1)=y$. 
\end{defi}

Fundamental examples of geodesic spaces are complete and connected Riemannian manifolds equipped with the Riemannian distance~\citep[Corollary 3.20]{BriHaf99}. The class of geodesic spaces includes however many other examples and we refer the reader to~\cite{BriHaf99,BurBurIva01,AleKapPet19_invitation,AleKapPet19} for more details. 

\begin{defi}
\label{def:convex}
Let $(M,d)$ be a geodesic space.
\begin{enumerate}[label=\rm{(\arabic*)}]
\item For $\alpha\in\R$, a function $f:M\to \R$ is called geodesically $\alpha$-convex if, for every geodesic $\gamma:[0,1]\to M$, the function 
\[t\in[0,1]\mapsto f(\gamma(t))-\frac{\alpha}{2} d^2(\gamma(0),\gamma(t)), \]
is convex. We call $f$ geodesically convex if it is geodesically $0$-convex and geodesically concave if $-f$ is geodesically convex.
\item For $\beta>0$, a function $f:M\to \R$ is called geodesically $\beta$-expconcave if the function $\exp(-\beta f)$ is geodesically concave.
\end{enumerate}
\end{defi}
\begin{lem}
\label{lem:lcisec}
Let $(M,d)$ be a complete geodesic space and $f:M\to\R$ be a given function. 
\begin{enumerate}[label=\rm{(\arabic*)}]
\item Suppose that $f$ is geodesically $\beta$-expconcave for some $\beta>0$. Then, the function $f$ is geodesically convex.
\item Suppose that $f$ is geodesically $\alpha$-convex and $L$-Lipchitz for some $\alpha,L>0$. Then, it is geodesically $\beta$-expconcave for all 
$0< \beta\le \frac{\alpha}{L^2}$.
\end{enumerate}
\end{lem}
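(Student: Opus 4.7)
For part (1), the central observation is that $x \mapsto -\log x$ is convex and decreasing on $(0,\infty)$. For any geodesic $\gamma:[0,1]\to M$, the assumed geodesic concavity of $e^{-\beta f}$ yields
\[
e^{-\beta f(\gamma(t))} \ge (1-t)\,e^{-\beta f(\gamma(0))} + t\,e^{-\beta f(\gamma(1))}.
\]
Applying $-\beta^{-1}\log$ (which flips the inequality, since it is decreasing) and then the two-point convexity inequality for $-\log$ will give $f(\gamma(t)) \le (1-t)\,f(\gamma(0)) + t\,f(\gamma(1))$, which is the geodesic convexity of $f$.

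For part (2), my plan is to reduce to a one-dimensional midpoint-concavity computation along each geodesic. I fix a geodesic $\gamma:[0,1]\to M$, set $D := d(\gamma(0),\gamma(1))$, and put $g(t) := \exp(-\beta f(\gamma(t)))$. Since $g$ is continuous on $[0,1]$, it will suffice to establish the midpoint inequality $g\bigl(\tfrac{s+u}{2}\bigr) \ge \tfrac12\bigl(g(s) + g(u)\bigr)$ for all $0 \le s \le u \le 1$ and then invoke the classical fact that a continuous midpoint-concave function on an interval is concave. To obtain the midpoint inequality, I will apply the $\alpha$-convexity of $f$ along the reparametrized sub-geodesic $\lambda \mapsto \gamma(s + \lambda(u-s))$, evaluated at $\lambda = 1/2$; this gives
\[
f\bigl(\gamma(\tfrac{s+u}{2})\bigr) \le \tfrac12 f(\gamma(s)) + \tfrac12 f(\gamma(u)) - \tfrac{\alpha (u-s)^2 D^2}{8},
\]
and exponentiating will yield $g\bigl(\tfrac{s+u}{2}\bigr) \ge \sqrt{g(s)\,g(u)} \cdot e^{\beta\alpha D^2 (u-s)^2/8}$.

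The final step relies on the elementary inequality $\cosh(x) \le e^{x^2/2}$ (immediate by comparing Taylor series) together with the Lipschitz hypothesis. Writing
\[
\tfrac12\bigl(g(s)+g(u)\bigr) = \sqrt{g(s)\,g(u)} \cdot \cosh\!\bigl(\tfrac{\beta}{2}(f(\gamma(s))-f(\gamma(u)))\bigr),
\]
the $\cosh$ bound combined with $|f(\gamma(s)) - f(\gamma(u))| \le LD(u-s)$ gives $\tfrac12(g(s)+g(u)) \le \sqrt{g(s)\,g(u)} \cdot e^{\beta^2 L^2 D^2 (u-s)^2/8}$. The hypothesis $\beta \le \alpha/L^2$ is precisely $\beta^2 L^2 \le \beta\alpha$, so the gain from $\alpha$-convexity dominates the Lipschitz term and the midpoint concavity follows. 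The main subtlety I anticipate is that a purely ``endpoint'' $\alpha$-convexity argument (using only $t=0,1$ on $\gamma$) is not sharp enough to deduce pointwise concavity of $g$: one must exploit $\alpha$-convexity at midpoints of arbitrary sub-geodesics, and the $\cosh$ inequality is exactly the tool that absorbs the Lipschitz contribution into the exponential.
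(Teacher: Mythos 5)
Your proof is correct and takes essentially the same route as the paper's: part (1) via monotonicity and concavity of the logarithm, and part (2) by reducing to a midpoint inequality along (sub-)geodesics, applying $\alpha$-convexity at the midpoint, and absorbing the Lipschitz contribution with $\cosh(u)\le e^{u^2/2}$ under the condition $\beta^2L^2\le\alpha\beta$. The only differences are cosmetic: you normalize by $\sqrt{g(s)g(u)}$ where the paper normalizes by $e^{-\beta f(\gamma_{1/2})}$, and you make explicit the continuity argument (midpoint concavity plus continuity implies concavity) that the paper leaves implicit.
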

These definitions reduce to familiar notions of convexity in the context of Euclidean spaces. In particular, the second statement in Lemma \ref{lem:lcisec} extends to geodesic spaces a classical result by~\cite{Hazan06} in the context of Euclidean spaces. 

\subsection{Alexandrov curvature bounds}
\label{subsec:alex}
For $\kappa\in \R$, a remarkable geodesic space is the $\kappa$-plane  $(M^2_{\kappa},d_{\kappa})$ defined as the unique\footnote{This space corresponds to the hyperbolic plane with curvature $\kappa$ for $\kappa<0$, the Euclidean plane for $\kappa=0$ and the $2$-dimensional unit sphere with angular metric multiplied by $1/\sqrt{\kappa}$ for $\kappa>0$.} (up to isometry) $2$-dimensional, complete and simply connected, Riemannian manifold with constant sectional curvature $\kappa$, equipped with its Riemannian distance $d_{\kappa}$. The diameter $D_{\kappa}$ of $M^{2}_{\kappa}$ is 
\[D_{\kappa}:=\left\{\begin{array}{cc}
     +\infty&\mbox{if}\quad\kappa\le0,\\
     \pi/\sqrt{\kappa}&\mbox{if}\quad\kappa>0,
\end{array}\right.\]
and there is a unique geodesic $[0,1]\to M$ connecting $x$ to $y$ in $M^2_{\kappa}$ provided $d_{\kappa}(x,y)<D_{\kappa}$. 

Given a metric space $(M,d)$, we call triangle in $M$ any set of three points $\{p,x,y\}\subset M$. For $\kappa\in\R$, a comparison triangle for $\{p,x,y\}\subset M$ in $M^2_{\kappa}$ is an isometric copy $\{\bar p,\bar x,\bar y\}\subset M^2_{\kappa}$ of $\{p,x,y\}$ in $M^2_{\kappa}$ (i.e., pairwise distances are preserved). Such a comparison triangle always exists and is unique (up to an isometry) provided the perimeter $\mathrm{peri}\{p,x,y\}:=d(p,x)+d(p,y)+d(x,y)< 2D_{\kappa}$. 

\begin{defi}
\label{def:curvb}
For $\kappa\in \R$, we say that a geodesic space $(M,d)$ has curvature bounded below by $\kappa$, and denote ${\rm curv}(M)\ge\kappa$, if for any triangle $\{p,x,y\}\subset M$ with $\mathrm{peri}\{p,x,y\}< 2D_{\kappa}$ and any geodesic $\gamma:[0,1]\to M$ connecting $x$ to $y$ in $M$, we have
\begin{equation}
\label{eq:curvcomp}
\forall t\in[0,1],\quad d(p,\gamma(t))\ge d_{\kappa}( \bar p,\bar\gamma(t)),
\end{equation}
where $\{\bar p,\bar x,\bar y\}$ is the unique comparison triangle of $\{p,x,y\}$ in $M^2_{\kappa}$ and where $\bar\gamma:[0,1]\to M^2_{\kappa}$ is any geodesic connecting $\bar x$ to $\bar y$ in $M^2_{\kappa}$. Similarly, we say that $M$ has curvature bounded above by $\kappa$, and denote ${\rm curv}(M)\le\kappa$, if the same holds with opposite inequality in \eqref{eq:curvcomp}.
\end{defi}
The $0$-plane $(M^2_0,d_0)$ is the familiar Euclidean plane. The properties of the Euclidean plane allow to reformulate, in simpler terms, the definition of curvature bounds in the case $\kappa=0$.
\begin{cor}
\label{cor:0cb}
A geodesic space $(M,d)$ satisfies ${\rm curv}(M)\ge0$ iff, for all $p,x,y\in M$, any geodesic $\gamma:[0,1]\to M$ connecting $x$ to $y$ in $M$, and any $t\in [0,1]$,
\begin{equation}
\label{eq:curvcomp2}
d^2(p,\gamma(t))\ge (1-t)d^2(p,x)+td^2(p,y)-t(1-t)d^2(x,y).
\end{equation}
Similarly, ${\rm curv}(M)\le0$ if the same holds with opposite inequality in \eqref{eq:curvcomp2}.
\end{cor}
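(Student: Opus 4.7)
The plan is to reduce the statement to a single elementary Euclidean identity, exploiting the fact that the comparison plane in Corollary \ref{cor:0cb} is $\R^2$ with its usual metric.

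First I would record the key identity in $M_0^2$: for any $\bar p,\bar x,\bar y\in M_0^2$ and any geodesic $\bar\gamma:[0,1]\to M_0^2$ with $\bar\gamma(0)=\bar x$, $\bar\gamma(1)=\bar y$ (which is just the affine segment $\bar\gamma(t)=(1-t)\bar x+t\bar y$), one has
\begin{equation*}
d_0^2(\bar p,\bar\gamma(t))=(1-t)d_0^2(\bar p,\bar x)+td_0^2(\bar p,\bar y)-t(1-t)d_0^2(\bar x,\bar y).
\end{equation*}
This is a direct expansion of $|\bar p-(1-t)\bar x-t\bar y|^2$ using the inner product, and is sometimes called the parallelogram (or Stewart) identity.

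Next, I would invoke Definition \ref{def:curvb} in the special case $\kappa=0$. Since $D_0=+\infty$, the perimeter condition $\mathrm{peri}\{p,x,y\}<2D_0$ is automatic, so a comparison triangle $\{\bar p,\bar x,\bar y\}\subset M_0^2$ exists for every triple $\{p,x,y\}\subset M$ and, by isometry of the comparison, $d_0(\bar p,\bar x)=d(p,x)$, $d_0(\bar p,\bar y)=d(p,y)$ and $d_0(\bar x,\bar y)=d(x,y)$. For the forward implication, assume $\mathrm{curv}(M)\ge 0$. Then for any $t\in[0,1]$,
\begin{equation*}
d(p,\gamma(t))\ge d_0(\bar p,\bar\gamma(t)).
\end{equation*}
Squaring (both sides are non-negative) and substituting the Euclidean identity above yields \eqref{eq:curvcomp2}. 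For the converse, assume \eqref{eq:curvcomp2}. Using the same identity in reverse, the right-hand side of \eqref{eq:curvcomp2} equals $d_0^2(\bar p,\bar\gamma(t))$, so $d^2(p,\gamma(t))\ge d_0^2(\bar p,\bar\gamma(t))$, hence $d(p,\gamma(t))\ge d_0(\bar p,\bar\gamma(t))$, which is exactly \eqref{eq:curvcomp} for $\kappa=0$.

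The upper curvature case $\mathrm{curv}(M)\le 0$ is handled by repeating the same argument with all inequalities reversed, the Euclidean identity being an equality and thus unaffected by the direction of the comparison. There is essentially no obstacle here; the only thing to be careful about is making sure that squaring/unsquaring is legitimate, which it is because distances are non-negative, and that the Euclidean identity is an \emph{equality}, so it can be used in either direction of the equivalence.
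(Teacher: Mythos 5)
Your proof is correct and is exactly the argument the paper intends (the paper omits the proof, attributing the corollary to ``the properties of the Euclidean plane''): the Euclidean identity $d_0^2(\bar p,\bar\gamma(t))=(1-t)d_0^2(\bar p,\bar x)+td_0^2(\bar p,\bar y)-t(1-t)d_0^2(\bar x,\bar y)$ converts the comparison inequality of Definition \ref{def:curvb} with $\kappa=0$ into \eqref{eq:curvcomp2} and back, and you correctly note that $D_0=+\infty$ makes the perimeter condition vacuous so comparison triangles always exist. Nothing is missing.
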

Combining Definition \ref{def:convex} and Corollary \ref{cor:0cb}, it follows that ${\rm curv}(M)\le 0$ iff, for all $p\in M$, the function $d^2(p,.):M\to \R_+$ is $2$-convex. In particular, we deduce directly from Lemma \ref{lem:lcisec}, and the triangular inequality, the following fact.

\begin{cor}
\label{cor:d2isbetaconcave}
Let $(M,d)$ be a geodesic space satisfying ${\rm curv}(M)\le 0$ and with finite diameter. Then, for all $p\in M$, the function $d^2(p,.):M\to \R_+$ is $\beta$-concave for all $0<\beta\le 1/( 2\,\mathrm{diam}(M)^2)$.
\end{cor}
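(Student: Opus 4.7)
The plan is to combine the observation that $d^2(p,\cdot)$ is geodesically $2$-convex (already noted in the paragraph preceding the corollary) with a simple Lipschitz estimate, and then invoke Lemma \ref{lem:lcisec}(2) to upgrade $\alpha$-convexity to $\beta$-expconcavity (the ``$\beta$-concave'' in the statement should be read as $\beta$-expconcave in the sense of Definition \ref{def:convex}(2)).

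First, I would record that the text just above the corollary already establishes that, under $\mathrm{curv}(M)\le 0$, the function $f:=d^2(p,\cdot)$ is geodesically $2$-convex. For completeness I would indicate one line: for any geodesic $\gamma:[0,1]\to M$ from $x$ to $y$ and any $0\le a\le b\le 1$, the reparametrization of $\gamma|_{[a,b]}$ is itself a geodesic from $\gamma(a)$ to $\gamma(b)$ with $d(\gamma(a),\gamma(b))=(b-a)d(x,y)$, so applying \eqref{eq:curvcomp2} (with reversed inequality) to this sub-geodesic and using the identity $((1-s)a+sb)^2-(1-s)a^2-sb^2=-s(1-s)(b-a)^2$ yields convexity of $t\mapsto f(\gamma(t))-t^2 d^2(x,y)=f(\gamma(t))-d^2(\gamma(0),\gamma(t))$, which is exactly the definition of $2$-convexity with $\alpha=2$.

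Next I would verify the Lipschitz bound. For any $x,y\in M$, factoring a difference of squares gives
\[|d^2(p,x)-d^2(p,y)|=(d(p,x)+d(p,y))\,|d(p,x)-d(p,y)|\le 2\,\mathrm{diam}(M)\,d(x,y),\]
where the last step uses the triangular inequality $|d(p,x)-d(p,y)|\le d(x,y)$ together with $d(p,x),d(p,y)\le\mathrm{diam}(M)$. Hence $f$ is $L$-Lipschitz with $L=2\,\mathrm{diam}(M)$.

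Finally, I would apply Lemma \ref{lem:lcisec}(2) with $\alpha=2$ and $L=2\,\mathrm{diam}(M)$: it yields $\beta$-expconcavity of $f$ for every $0<\beta\le \alpha/L^2=2/(2\,\mathrm{diam}(M))^2=1/(2\,\mathrm{diam}(M)^2)$, which is the claimed range. There is no real obstacle here; the only subtle point is making sure the finite-diameter hypothesis is what converts the $2$-convexity into a uniform Lipschitz constant so that Lemma \ref{lem:lcisec}(2) can be applied at all, and that the arithmetic $\alpha/L^2$ matches the claimed bound $1/(2\,\mathrm{diam}(M)^2)$.
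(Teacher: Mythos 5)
Your proof is correct and follows exactly the route the paper intends (the corollary is stated as a direct consequence of the $2$-convexity of $d^2(p,\cdot)$ noted just before it, the triangle inequality giving the Lipschitz constant $L=2\,\mathrm{diam}(M)$, and Lemma \ref{lem:lcisec}(2) with $\alpha=2$); your arithmetic $\alpha/L^2=1/(2\,\mathrm{diam}(M)^2)$ and your reading of ``$\beta$-concave'' as $\beta$-expconcave are both right. You merely fill in the sub-geodesic reparametrization and difference-of-squares details that the paper leaves implicit.
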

Next is a list of (complete and separable\footnote{These properties seem to be necessary for the validity of Jensen's inequality as explained below.}) geodesic spaces that have curvature bounds in the sense of Definition \ref{def:curvb}.

\begin{exm}
\label{exm:curv}
\normalfont
\begin{enumerate}[label=\rm{(\arabic*)}]
\item A normed vector space $V$ has a curvature bound from above, or below, iff it is a pre-Hilbert space\footnote{This follows by combining Proposition 4.5 in~\citet{BriHaf99} and the fact that angles are well defined for geodesic spaces with upper or lower bounded curvature.}, in which case it satisfies ${\rm curv}(V)\ge 0$ and ${\rm curv}(V)\le 0$ since \eqref{eq:curvcomp2} holds as an identity. 
\item For any $\kappa\in \R$, a complete Riemannian manifold $M$ with sectional curvature everywhere lower bounded by $\kappa$ satisfies ${\rm curv}(M)\ge\kappa$.     
\item For $\kappa\le 0$, a complete and simply connected Riemannian manifold $M$ with sectional curvature everywhere upper bounded by $\kappa$ satisfies ${\rm curv}(M)\le\kappa$.
\item The frontier $\partial K$ of a convex body\footnote{i.e., a convex and compact subset with non-empty interior.} $K\subset \R^p$ equipped with its length metric\footnote{Here we mean the length metric inherited from the Euclidean distance. Roughly speaking, this means that the distance between $x,y\in\partial K$ is defined as the length of the shortest continuous (in terms of the Euclidean topology) path connecting $x$ to $y$ and whose image is included in $\partial K$.} is a geodesic space satisfying $\mathrm{curv}(\partial K)\ge 0$.
\item A (complete and separable) geodesic space $(\Omega,d)$ satisfies $\mathrm{curv}(\Omega)\ge 0$ iff the space $\mathcal P_2(\Omega)$, equipped with the $2$-Wasserstein metric, satisfies $\mathrm{curv}(\mathcal P_2(\Omega))\ge 0$~\cite[Proposition 2.10]{Stu06I}.
\end{enumerate}
\end{exm}

\subsection{Barycenters and Jensen's inequality}
\label{subsec:baretjen}
Given a metric space $(M,d)$, let $\mathcal P_2(M)$ be the set of Borel probability measures $m$ on $M$ satisfying, for all $x\in M$,
\[\mathcal V_m(x):=\int_M d^2(x,y)\,{\rm d}m(y)<+\infty.\]
For $m\in\mathcal P_2(M)$, we call $\mathcal V_m:M\to\R_+$ its variance functional and denote
\begin{equation}
\label{eq:variance}
\mathcal V^*_m:=\inf_{x\in M}\mathcal V_m(x).
\end{equation}

\begin{defi}
Given a metric space $(M,d)$ and $m\in\mathcal P_2(M)$, a barycenter of $m$ is any $x^*\in M$ such that
\begin{equation}
\label{xstarintrobary}
\mathcal V_m(x^*)=\mathcal V^*_m.
\end{equation}
\end{defi}
Barycenters provide a generalization\footnote{Note for instance that if $(M,d)=(\R^p,\|.-.\|_2)$ and if $\int\|.\|^2_2\,\mathrm{d}m<+\infty$, then
$x^*=\int x\,\mathrm{d}m(x)$ is the unique minimizer of $x\in M\mapsto\int \|x-.\|^2_2\,\mathrm{d}m$.} of the notion of mean value when $M$ has no linear structure. While alternative notions of mean value in a metric space have been proposed, barycenters are often favored for their simple interpretation and constructive definition as solution of an optimization problem. The question of existence and uniqueness of barycenters has been addressed in a number of settings. While uniqueness will be of less interest in the sequel, we mention two classical results on the existence of barycenters. 
\begin{itemize}
    \item If $M$ is geodesic and locally compact, then any $m\in \mathcal P_2(M)$ admits at least one barycenter\footnote{The Hopf-Rinow Theorem~\citep[Proposition 3.7]{BriHaf99} states that a closed and bounded subset of a locally compact geodesic space is compact. Since the variance functional $\mathcal V_m$ is lower semi-continuous, by application of Fatou's Lemma, the statement follows from a standard compactness argument.}. 
\item If $M$ is geodesic, complete and satisfies ${\rm curv}(M)\le 0$, then any $m\in \mathcal P_2(M)$ admits a (unique) barycenter~\citep[Theorem 4.9]{Stu03}. 
\end{itemize}

In specific geodesic spaces which do not necessarily satisfy these assumptions, the existence and uniqueness of barycenters can be obtained via a taylored analysis such as in Wasserstein spaces~\citep[see, e.g.][]{AguCar11,GouLou17}. 

Along with the notion of barycenters, a fundamental result needed next is a suitable version of Jensen's inequality. We mention two results in this direction.
\begin{lem}[\citealp{Stu03}, Theorem 6.2]
\label{lem:jensen1}
Let $(M,d)$ be a complete geodesic space with ${\rm curv}(M)\le 0$. Let $m\in \mathcal P_2(M)$ and let $x^*$ be its unique barycenter. Let $f:M\to \R$ be convex and lower semi-continuous. Then we have
    \[f(x^*)\le\int_M f\,\mathrm{d}m,\]
    provided $f$ is either positive or in $L^1(m)$.
\end{lem}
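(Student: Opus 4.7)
My plan is to exploit a stochastic approximation scheme specific to NPC spaces that constructs $x^*$ as the almost-sure limit of iterated geodesic interpolations, combining this with the geodesic convexity of $f$ and the strong law of large numbers.

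Let $(X_k)_{k\ge 1}$ be i.i.d.\ samples from $m$, fix any $z_0\in M$, and define recursively
\[ z_k:=\gamma_{z_{k-1},X_k}(1/k),\qquad k\ge 1, \]
where $\gamma_{a,b}:[0,1]\to M$ denotes the (almost surely unique) geodesic from $a$ to $b$. Since $f$ is geodesically convex in the sense of Definition \ref{def:convex}, applying the defining inequality along $\gamma_{z_{k-1},X_k}$ at parameter $t=1/k$ gives
\[ f(z_k)\le\Bigl(1-\tfrac{1}{k}\Bigr)f(z_{k-1})+\tfrac{1}{k}f(X_k), \]
and a direct induction on $k$ yields
\[ f(z_k)\le\frac{1}{k}\sum_{i=1}^{k} f(X_i). \]

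To conclude I would invoke two limit theorems. The first, specific to NPC spaces and due to Sturm, is a strong law of large numbers for barycenters guaranteeing $z_k\to x^*$ almost surely. The second is the classical SLLN, applied either in its standard form when $f\in L^1(m)$ or in its Kolmogorov form for non-negative variables (with possibly infinite limit) when $f\ge 0$; either way
\[ \frac{1}{k}\sum_{i=1}^{k} f(X_i)\longrightarrow\int_M f\,\mathrm{d}m\qquad\text{a.s.} \]
Combining these with the lower semi-continuity of $f$ gives
\[ f(x^*)\le\liminf_{k\to\infty} f(z_k)\le\liminf_{k\to\infty}\frac{1}{k}\sum_{i=1}^{k} f(X_i)=\int_M f\,\mathrm{d}m \]
on a probability-one event; since the outer inequality is deterministic, the conclusion follows.

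The main obstacle is the almost-sure convergence $z_k\to x^*$. This genuinely requires $\mathrm{curv}(M)\le 0$ and would be established via a contraction estimate for $\esp\,d^2(z_k,x^*)$ resting on Sturm's variance inequality
\[ \mathcal V_m(z)\ge \mathcal V^*_m+d^2(z,x^*),\qquad z\in M, \]
characteristic of NPC spaces; without the curvature assumption barycenters need not be unique and the iteration can fail to converge. The integrability dichotomy in the hypothesis is accommodated by the two regimes of the SLLN and does not bring additional difficulty.
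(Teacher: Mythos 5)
Your argument is correct and is essentially the proof of the cited source: the paper does not prove this lemma itself but attributes it to Sturm (2003, Theorem~6.2), whose proof proceeds exactly via the inductive means $s_n=\gamma_{s_{n-1},y_n}(1/n)$, the law of large numbers for barycenters in NPC spaces, geodesic convexity along each interpolation step, and lower semi-continuity to pass to the limit. The only point to tidy is that Sturm's law of large numbers is most directly an $L^2$ statement ($\esp\, d^2(s_n,x^*)\le \mathcal V^*_m/n$, via the variance inequality you quote), so you should either cite the almost-sure version explicitly or pass to an a.s.\ convergent subsequence before invoking lower semi-continuity; this does not affect the conclusion.
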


\begin{lem}[\citealp{Par20}, Theorem 1.1]
\label{lem:jensen2}
Let $(M,d)$ be a complete and separable geodesic space with ${\rm curv}(M)\ge \kappa$, for some $\kappa\in \R$. Let $m\in \mathcal P_2(M)$ and suppose that it admits at least one barycenter $x^*$. Let $f:M\to \R$ be $\alpha$-convex, for some $\alpha\in \R$, and Lipschitz in a neighborhood of $x^*$. Then we have
    \[f(x^*)\le\int_M f\,\mathrm{d}m-\frac{\alpha}{2}\mathcal V^*_{m},\]
    provided $f$ is either positive or in $L^1(m)$.
\end{lem}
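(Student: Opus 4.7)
My plan is to invoke $\alpha$-convexity of $f$ along geodesics emanating from $x^*$, integrate against $m$, and then let the parameter $t\downarrow 0$, using the barycenter property of $x^*$ together with the lower curvature bound to dispose of the resulting first-order term.

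First, for each $y\in M$ I would select a geodesic $\gamma_y:[0,1]\to M$ joining $x^*$ to $y$; measurability of $y\mapsto\gamma_y$ can be arranged by a measurable selection argument in this complete separable setting. The $\alpha$-convexity of $f$ along $\gamma_y$, after dividing by $t\in(0,1)$ and rearranging, reads
\[f(y)-f(x^*)\;\ge\;\frac{f(\gamma_y(t))-f(x^*)}{t}+\frac{\alpha}{2}(1-t)\,d^2(x^*,y),\]
and integrating against $m$ yields
\[\int f\,{\rm d}m - f(x^*)\;\ge\;\int\frac{f(\gamma_y(t))-f(x^*)}{t}\,{\rm d}m(y)+\frac{\alpha}{2}(1-t)\,\mathcal V^*_m.\]
Since $f$ is $L$-Lipschitz near $x^*$ and $d(x^*,\gamma_y(t))=t\,d(x^*,y)$, for $t$ small enough the integrand in the middle term is dominated in absolute value by $L\,d(x^*,y)$, which lies in $L^1(m)$ because $d(x^*,\cdot)\in L^2(m)$. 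Fatou's lemma thus reduces everything to establishing the variational inequality
\[\liminf_{t\downarrow 0^+}\int\frac{f(\gamma_y(t))-f(x^*)}{t}\,{\rm d}m(y)\;\ge\;0.\]

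The hard part will be verifying this inequality, and it is exactly where $\mathrm{curv}(M)\ge\kappa$ intervenes. The intuition is cleanest in a smooth Riemannian setting: the integrand converges to $\langle\nabla f(x^*),\dot\gamma_y(0)\rangle$, the barycenter condition translates into $\int\dot\gamma_y(0)\,{\rm d}m(y)=0$ in $T_{x^*}M$, and the integral vanishes by linearity. In the abstract setting, the lower curvature bound provides (i) well-defined angles between geodesics issuing from $x^*$, (ii) a first-variation formula $\tfrac{d^+}{ds}d^2(\eta(s),y)\big|_{s=0}=-2\,d(x^*,y)\cos\theta(\dot\eta(0),v_y)$ along any geodesic $\eta$ out of $x^*$, and (iii) the translation of the minimality of $\mathcal V_m$ at $x^*$ into $\int d(x^*,y)\cos\theta(\dot\eta(0),v_y)\,{\rm d}m(y)\le 0$ for every admissible direction $\dot\eta(0)$, cf.~\cite{BurBurIva01}. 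Combined with the $\alpha$-convexity and Lipschitz control on $f$ at $x^*$, which bound its one-sided directional derivative below by $-L\,d(x^*,y)$ expressed through the same angular quantities, the barycentric variational inequality then yields the required nonnegativity. Synthesizing these directional objects rigorously in the absence of a genuine tangent cone is the delicate point, and is exactly the content of Theorem~1.1 in~\cite{Par20}.
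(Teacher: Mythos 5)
The paper does not actually prove Lemma \ref{lem:jensen2}: it is imported verbatim as Theorem 1.1 of \citet{Par20}, so there is no internal proof to compare your attempt against. Judged on its own terms, your reduction is the natural one and its first half is sound: $\alpha$-convexity of $t\mapsto f(\gamma_y(t))-\frac{\alpha}{2}t^2d^2(x^*,y)$ gives exactly the displayed inequality, the monotonicity of difference quotients of this convex function combined with the local Lipschitz bound supplies a $t$-uniform integrable minorant of the form $-Ld(x^*,y)-|\alpha|\,d^2(x^*,y)$ (which is what Fatou actually needs, rather than domination of absolute values), and everything is correctly reduced to the variational inequality $\liminf_{t\downarrow0}\frac1t\int\bigl(f(\gamma_y(t))-f(x^*)\bigr)\,{\rm d}m(y)\ge0$.

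The genuine gap is that this variational inequality \emph{is} the theorem, and you do not prove it: you describe the Riemannian picture, list the ingredients you would want (existence of angles, a first-variation formula for $d^2(\cdot,y)$, the barycentric inequality $\int d(x^*,y)\cos\theta\,{\rm d}m(y)\le0$), and then defer their synthesis to Theorem 1.1 of \citet{Par20} — which is circular. Two links in your chain are substantive and unestablished. First, passing from minimality of $\mathcal V_m$ at $x^*$ to the integrated angle inequality requires differentiating under the integral sign along a geodesic $\eta$, which needs a quantitative (semiconcavity-type) two-sided control of $s\mapsto d^2(\eta(s),y)$ uniform enough in $y$; this is where ${\rm curv}(M)\ge\kappa$ must be used concretely, not just to assert that angles exist. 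Second, and more seriously, the final step needs $\int D_{v_y}f(x^*)\,{\rm d}m(y)\ge0$ where $D_{v_y}f(x^*)$ is the one-sided derivative of $f$ along $\gamma_y$; in the Riemannian case this follows from linearity of ${\rm d}f(x^*)$ plus $\int\dot\gamma_y(0)\,{\rm d}m(y)=0$, but for a merely geodesically convex $f$ on an Alexandrov space the directional derivative is a positively homogeneous function on the tangent cone with no linearity, so the barycentric inequality for the single family of quantities $d(x^*,y)\cos\theta(\cdot,v_y)$ does not by itself control it. Bridging that is the actual content of the cited theorem, and a proof would have to supply it (e.g.\ via the structure of the tangent cone and the behaviour of differentials of semiconcave/convex functions on it) rather than invoke the result being proved.
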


\subsection{The measure contraction property}
Curvature bounds in the sense of Alexandrov, defined in paragraph \ref{subsec:alex}, generalize the notion of sectional curvature bounds of Riemannian manifolds (see statements $(2)$ and $(3)$ of Example \ref{exm:curv}). In this paragraph, we discuss the measure contraction property (\textsc{mcp}), which generalizes lower Ricci curvature bounds for Riemannian manifolds. In the recent years, a number of synthetic definitions of lower Ricci curvature bounds have been developed in the abstract context of metric-measure spaces~\citep[see, e.g.,][]{Stu06I,Stu06II,Oht07,LotVil09}. These have shown to imply many of the analytical properties expected in the context of Riemannian manifolds, under a Ricci curvature lower bound, in a wider context. The \textsc{mcp} property defined below is known to be one of the weakest forms of such definitions (see Remark \ref{rem:mcpandcd}). 

Consider a metric-measure space $(M,d,m)$ where $(M,d)$ is a complete and separable geodesic space and $m$ is a reference positive Borel measure on $M$. Suppose in addition that  $0<m(B(x,r))<+\infty$ for all $x\in M$ and all $r>0$, where $B(x,r):=\{y\in M:d(x,y)<r\}$. 

\begin{defi}
 Given $x\in M$, we call geodesic homothety of center $x$ any measurable map $h_x:M\times[0,1]\to M$ such that, for $m$-a.e. $y\in M$, the map $h_x(y,.):[0,1]\to M$ is a geodesic connecting $x$ to $y$. 
\end{defi}
A geodesic homothety is to be understood as a generalization of the map $h_x(y,\e):=(1-\e)x+\e y$, in a Euclidean space, contracting points towards the center $x$ with ratio $\e$. The \textsc{mcp} property defined next, quantifies the way Borel subsets of $M$ contract toward the center $x$ under the the action of $h_x$. For $\kappa\in\R$ and $r\ge 0$, we denote 
\begin{equation}
\label{eq:skappa}
s_{\kappa}(r):=\left\{
\begin{array}{cc}
     \frac{\sin(r\sqrt{\kappa})}{\sqrt{\kappa}}&\mbox{if }\kappa>0,\\
     r &\mbox{if }\kappa=0,\\
     \frac{\sinh(r\sqrt{-\kappa})}{\sqrt{-\kappa}}&\mbox{if }\kappa<0.
\end{array}\right.
\end{equation}

\begin{defi}[\citealp{Oht07}, Lemma 2.3]
For $\kappa\in\R$ and $p>1$, the space $(M,d,m)$ is said to satisfy the measure contraction property \textsc{mcp}$(\kappa,p)$ if, for all $x\in M$, there exists a geodesic homothety $h_x$ with center $x$ such that, for every measurable subset $A\subset M$ (with $A\subset B(x,\pi\sqrt{(p-1)/\kappa})$ if $\kappa>0$) and every $\e\in[0,1]$,
\[m(A^{\e}_{x})\ge \int_{A}\e\left(\frac{s_{\kappa}(\e d(x,y)/\sqrt{p-1})}{s_{\kappa}(d(x,y)/\sqrt{p-1})}\right)^{p-1}\,m(\mathrm{d}y),\]
where $A^{\e}_{x}:=\{h_{x}(y,\e):y\in A\}$ and with the convention $0/0=1$.
\end{defi}

\begin{exm}
\label{exm:mcp}
\normalfont
\begin{enumerate}[label=\rm{(\arabic*)}]
\item Suppose $M$ is a complete Riemannian manifold of dimension $p$. Let $d$ be the Riemannian distance and $m$ the volume measure. Then $\mathrm{Ric}_M\ge \kappa$ iff $(M,d,m)$ satisfies the \textsc{mcp}$(\kappa,p)$ property. In addition, for any function $f:M\to\R_+$ and any $q>0$ such that $f^{1/q}$ is geodesically concave, the weighted space $(M,d,fm)$ satisfies the \textsc{mcp}$(\kappa,p+q)$ property.
~\citep[Corollary 5.5(i)]{Stu06II}.
\item Let $(M,d)$ be a complete and locally compact geodesic space with curvature lower bounded by $\kappa\in\R$ in the sense of Definition \ref{def:curvb} and finite Hausdorff dimension $p$. Then letting $m$ be the $p$-dimensional Hausdorff measure, $(M,d,m)$ satisfies the \textsc{mcp}$((p-1)\kappa,p)$ property~\citep[Theorem 1.1]{KuwShi10}.
\end{enumerate}
\end{exm}

\begin{rem}
\label{rem:renormalize}
\normalfont
Given a metric measure space $(M,d,m)$ and $\alpha>0$, it appears clearly from the definition that $(M,d,m)$ satisfies the \textsc{mcp}$(\kappa,p)$ property if and only $(M,d,\alpha m)$ satisfies the same property. In particular, provided $0<m(M)<+\infty$, we can suppose that $m$ is a probability measure without loss of generality.
\end{rem}

\begin{rem}
\label{rem:mcpandcd}
\normalfont
An alternative, and more popular, synthetic definition of Ricci curvature lower bound is the curvature-dimension condition \textsc{cd}$(\kappa,p)$. Under minimal regularity conditions on the space $(M,d,m)$, this condition is known to imply the \textsc{mcp}$(\kappa,p)$ property~\citep[Theorem 5.4]{Stu06II}. Conversely, some examples of spaces satisfying the \textsc{mcp} property but not the \textsc{cd} property are known~\citep[see, e.g.,][]{Jui09,Riz16}.
\end{rem}
\section{Results}
\label{sec:results}

We adopt the same notation as in paragraph \ref{subsec:oco}. We suppose that the decision set $(M,d)$ is a (complete and separable) geodesic space. We fix a prior distribution $m\in\mathcal P_2(M)$ and suppose that the following properties hold.
\begin{enumerate}[label=(A\arabic*),leftmargin=*]
\item\label{A1} \textbf{Existence of barycenters}\\ 
Any $\mu\in \mathcal P_2(M)$ admits at least one barycenter.
\item\label{A2} \textbf{Jensen's inequality}\\ 
For any $\mu\in \mathcal P_2(M)$, any barycenter $x^*$ of $\mu$ and any geodesically convex $f:M\to \R$, either positive or in $L^1(\mu)$, we have 
    \[f(x^*)\le\int_{M}f\,{\rm d}\mu.\]
\item\label{A3} \textbf{Measure contraction property}\\ 
There exists $\kappa\in\R$ and $p>1$ such that $(M,d,m)$ satisfies the \textsc{mcp}$(\kappa,p)$ property.
\end{enumerate}

Note that Assumptions \ref{A1} and \ref{A2} refer to characteristics of the metric space $(M,d)$ while \ref{A3} refers to a property of the metric-measure space $(M,d,m)$. It follows from section \ref{sec:prelim} that these properties are satisfied in a wide setting. For instance, typical examples for which all three assumptions are satisfied at once include: 
\begin{itemize}
    \item Bounded and locally compact geodesic spaces $(M,d)$ with finite Hausdorff dimension $p>1$, curvature lower bounded by $\kappa/(p-1)$ for some $\kappa\in\R$ in the sense of Definition \ref{def:curvb} and equipped with the renormalized $p$-dimensional Hausdorff measure $m$. 
    \item Complete and connected Riemannian manifolds of dimension $d<p$, with the Riemannian distance, sectional curvature lower bounded by $\kappa/(p-1)$ for some $\kappa\in\R$ (and hence $\mathrm{Ric}_M\ge \kappa$) and reference measure $m$ with density $f$ with respect to the volume measure such that $f^{1/(p-d)}$ is geodesically concave. 
\end{itemize}

\subsection{The {\normalfont\textsc{ewb}} forecaster}
\label{subsec:ewb}
We are now in position to define a learning strategy we call the Exponentially Weighted Barycentric (\textsc{ewb}) forecaster.
\begin{defi}
\label{defi:ewb}
Let $(\beta_t)_{t\ge 1}$ be a sequence of positive tuning parameters. Then, for $t\ge 1$, we define $x_t$ as a barycenter of $m_t$, i.e.,
\begin{equation}
\label{eq:ewametric}
x_t\in\underset{x\in  M}{\arg\min}\int_{M}d^2(x,y)\,m_t(\mathrm{d}y),
\end{equation}
where, as in the classical setting, $m_1:=m$ and, for all $t\ge 1$,
\begin{equation}
\nonumber
\mathrm{d}m_{t+1}(x):=\frac{e^{-\beta_{t+1} \ell_{t}(x)}}{\int_{M}e^{-\beta_{t+1} \ell_{t}}\,\mathrm{d}m_t}\,\mathrm{d}m_t(x).
\end{equation}
\end{defi}

\subsection{Regret bounds}
\label{subsec:regret}
Throughout the rest of the section, we denote \[\psi(r):=(r\coth r)\exp(-r\coth r),\]
for $r>0$, set $\psi(0)=e^{-1}$ and define, for all $x\in M$, all $p>1$ and all $\kappa\in\R$,
\[c_{\kappa,p}(x):=\left\{\begin{array}{cc}
    1 &\mbox{if } \kappa\ge 0,  \\
    \int_M \psi\left(d(x,y)\sqrt{\frac{-\kappa}{p-1}}\right)\,m(\mathrm{d}y) & \mbox{if } \kappa< 0.
\end{array}
\right.\]

\begin{wrapfigure}{r}{0.5\textwidth}
  \begin{center}
   \includegraphics[width=0.48\textwidth]{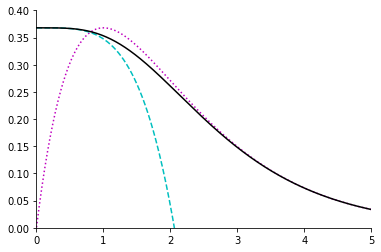}
     \caption{Plot of function $\psi(r)=(r\coth r)\exp(-r\coth r)$ (solid line), equivalent to $\psi_0(r)=1/e-r^4/18e$ (dashed line) at $0$ and to $\psi_{\infty}(r)=r\exp(-r)$ (dotted line) at $+\infty$.}
  \label{fig:psi}
  \end{center}
\end{wrapfigure}

\begin{thm}
\label{thm:ewb}
Assume that \ref{A1}, \ref{A2} and \ref{A3} hold. Suppose that there exists $\beta>0$ such that every $\ell\in\mathcal L$ is geodesically $\beta$-expconcave. Then taking $\beta_t=\beta$ for all $t\ge 1$ and selecting any
\begin{equation}
    \label{eq:xstarn}
x^*_n\in\underset{x\in M}{\arg\min}\,\sum_{t=1}^n\ell_t(x),
\end{equation}
the regret of the \textsc{ewb} forecaster satisfies, for all $n\ge 2$, 
\[R_n\le \frac{1}{\beta}\left(2+\ln\frac{1}{c_{\kappa,p}(x^*_n)}\right)+\frac{p\ln n}{\beta}.\]
\end{thm}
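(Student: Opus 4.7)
The plan is to adapt the classical two-step potential-function analysis of the Euclidean \textsc{ewa} \citep{Hazan06}, using Assumption \ref{A2} (Jensen's inequality) in place of ordinary Jensen in $\mathbb{R}^p$ and Assumption \ref{A3} (the \textsc{mcp}) in place of the scale-invariance of Lebesgue measure used by \cite{Hazan06}. Set $L_t:=\sum_{s=1}^t\ell_s$, $L_0\equiv 0$, and introduce the potential
\[W_t:=\int_M e^{-\beta L_{t-1}(x)}\,\mathrm{d}m(x),\]
so that $W_1 = 1$ (as $m\in\mathcal P_2(M)$ is a probability measure) and $\mathrm{d}m_t = W_t^{-1}e^{-\beta L_{t-1}}\,\mathrm{d}m$. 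Since $\ell_t$ is $\beta$-expconcave, $-e^{-\beta\ell_t}$ is geodesically convex, and Assumption \ref{A2} applied at the barycenter $x_t$ of $m_t$ (whose existence is granted by \ref{A1}) gives $e^{-\beta\ell_t(x_t)}\ge \int_M e^{-\beta\ell_t}\,\mathrm{d}m_t = W_{t+1}/W_t$. Telescoping yields $\sum_{t=1}^n\ell_t(x_t)\le -\beta^{-1}\ln W_{n+1}$, hence $R_n \le -\beta^{-1}\ln\bigl(W_{n+1}\,e^{\beta L_n(x^*_n)}\bigr)$.

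The second step is a lower bound on $W_{n+1}\,e^{\beta L_n(x^*_n)}$, coupling \textsc{mcp} with geodesic expconcavity. Fix $\epsilon \in (0,1)$ and let $h := h_{x^*_n}$ be a homothety witnessing \textsc{mcp}$(\kappa,p)$ at $x^*_n$. Applying the defining inequality with $A := h(\cdot,\epsilon)^{-1}(B)$ and using $h(A,\epsilon) \subset B$ yields, after a routine approximation, the pushforward bound
\[\int_M g\,\mathrm{d}m \;\ge\; \int_M g(h(y,\epsilon))\,\rho_\epsilon(y)\,\mathrm{d}m(y),\qquad g \ge 0,\]
where $\rho_\epsilon(y) := \epsilon\,\bigl(s_\kappa(\epsilon d(x^*_n,y)/\sqrt{p-1})/s_\kappa(d(x^*_n,y)/\sqrt{p-1})\bigr)^{p-1}$. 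Taking $g = e^{-\beta L_n}$ and using the geodesic concavity of each $e^{-\beta\ell_s}$ along $t \mapsto h(y,t)$ (which gives $e^{-\beta\ell_s(h(y,\epsilon))} \ge (1-\epsilon)e^{-\beta\ell_s(x^*_n)}$ and, after summation, $e^{-\beta L_n(h(y,\epsilon))} \ge (1-\epsilon)^n e^{-\beta L_n(x^*_n)}$) produces
\[R_n \;\le\; -\frac{1}{\beta}\Bigl[n\ln(1-\epsilon) + \ln\!\int_M \rho_\epsilon\,\mathrm{d}m\Bigr].\]

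It then remains to choose $\epsilon$. For $\kappa \ge 0$, concavity of $s_\kappa$ on its support gives $s_\kappa(\epsilon r)/s_\kappa(r) \ge \epsilon$, hence $\int \rho_\epsilon\,\mathrm{d}m \ge \epsilon^p = \epsilon^p c_{\kappa,p}(x^*_n)$; with $\epsilon = 1/n$ and $-n\ln(1-1/n) \le 2$ (valid for $n \ge 2$) the claim follows. The delicate case is $\kappa < 0$: there $s_\kappa = \sinh(\cdot\sqrt{-\kappa})/\sqrt{-\kappa}$ is convex, so $\rho_\epsilon \le \epsilon^p$ and the shortfall must be encoded in the hyperbolic profile $\psi(r) = r\coth(r)\,e^{-r\coth r}$. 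The key target is an integrated estimate of the form $(1-\epsilon_\star)^n \int \rho_{\epsilon_\star}\,\mathrm{d}m \ge e^{-2} n^{-p} c_{\kappa,p}(x^*_n)$ for a well-chosen $\epsilon_\star$. I expect this to require letting the contraction rate vary with the radial coordinate $r(y) := d(x^*_n, y)\sqrt{-\kappa/(p-1)}$, roughly $\epsilon(y) \approx r(y)\coth(r(y))/n$, so that $(1-\epsilon(y))^n \approx e^{-r(y)\coth r(y)}$ while the prefactor $\epsilon(y)$ supplies the missing $r\coth r$; the rigorous implementation goes through a disintegration of \textsc{mcp} along the radial geodesics emanating from $x^*_n$, with the approximation errors in $(1-\epsilon(y))^n \approx e^{-r\coth r}$ absorbed into the additive $2/\beta$. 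Matching the \textsc{mcp} comparison density to the profile $\psi$ in this last step is the technical heart of the theorem; everything else is a routine metric-geodesic translation of the classical Euclidean \textsc{ewa} argument.
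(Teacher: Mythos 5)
Your overall architecture is the paper's: the potential $W_t$ with telescoping, Assumption \ref{A2} applied to the geodesically concave function $e^{-\beta\ell_t}$ at the barycenter $x_t$ to get $\sum_t \ell_t(x_t)\le -\beta^{-1}\ln W_{n+1}$, and then the \textsc{mcp} homothety centered at $x^*_n$ combined with $e^{-\beta\ell_t(h(y,\varepsilon))}\ge(1-\varepsilon)e^{-\beta\ell_t(x^*_n)}$ to lower-bound $W_{n+1}e^{\beta L_n(x^*_n)}$ by $(1-\varepsilon)^n\int\rho_\varepsilon\,\mathrm{d}m$. (The paper phrases the second step via the Gibbs variational principle with the test measure $m(\cdot\cap M_\varepsilon)/m(M_\varepsilon)$, where $M_\varepsilon$ is the image of $M$ under the homothety, which is equivalent to your pushforward inequality.) Your treatment of $\kappa\ge 0$, with $\varepsilon=1/n$ and $-n\ln(1-1/n)\le 2$ for $n\ge 2$, is complete and correct.

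The case $\kappa<0$, however, is a genuine gap, and the route you sketch for it would not go through. You propose to let the contraction ratio $\varepsilon$ depend on the point, $\varepsilon(y)\approx r(y)\coth(r(y))/n$, and to ``disintegrate the \textsc{mcp} along radial geodesics.'' The \textsc{mcp} as defined gives, for each \emph{fixed} $\varepsilon$, a single measure inequality for the set $A^\varepsilon_x$; it provides no disintegration and no licence to use a $y$-dependent ratio, so this step has no foundation in the hypotheses. More importantly, no such device is needed: the function $\psi$ enters through a purely pointwise, fixed-$\varepsilon$ estimate. Since $\varepsilon\mapsto\ln\sinh(\varepsilon r)$ is concave for each $r>0$, the tangent-line bound at $\varepsilon$ gives $\ln\sinh(\varepsilon r)-\ln\sinh(r)\ge-(1-\varepsilon)\varepsilon r\coth(\varepsilon r)\ge -\varepsilon\, r\coth r$... more precisely one obtains, for all $r\ge 0$ and all $\varepsilon\in(0,1/2]$,
\[
\frac{\sinh(\varepsilon r)}{\sinh(r)}\;\ge\;\varepsilon\,(r\coth r)e^{-r\coth r}\;=\;\varepsilon\,\psi(r),
\]
using $e^{(1-\varepsilon)f(r)}\ge 1+(1-\varepsilon)f(r)\ge \varepsilon f(r)$ with $f(r)=r\coth r$. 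Substituting this into $\rho_\varepsilon$ with the \emph{same} fixed $\varepsilon=1/n$ yields $\int_M\rho_{1/n}\,\mathrm{d}m\ge n^{-p}\,c_{\kappa,p}(x^*_n)$, and the factor $(1-1/n)^n\ge e^{-2}$ is handled exactly as in the case $\kappa\ge0$. Your heuristic that $(1-\varepsilon)^n$ should produce $e^{-r\coth r}$ and the prefactor should produce $r\coth r$ correctly anticipates the shape of $\psi$, but attributes it to the wrong mechanism; as written, the $\kappa<0$ branch of your argument is not a proof.
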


When $\kappa\ge0$, the regret bound displayed in Theorem \ref{thm:ewb} reads
\[R_n\le \frac{2}{\beta}+\frac{p\ln n}{\beta}.\]
This regret bound was already obtained for the case where $M\subset \R^p$ is a bounded convex set, with non-empty interior, equipped with Euclidean metric and uniform measure~\citep[Theorem 7]{Hazan06}. The above result shows that this behavior is exactly preserved whenever $(M,d,m)$ satisfies Assumptions \ref{A1}-\ref{A3} with $\kappa\ge 0$. In the case of $\kappa<0$, the regret bound displays the additional term
\[\frac{1}{\beta}\ln\frac{1}{c_{\kappa,p}(x^*_n)}=\frac{1}{\beta}\ln\frac{1}{\int_M \psi\left(d(x^*_n,y)\sqrt{\frac{-\kappa}{p-1}}\right)\,m(\mathrm{d}y)},\]
whose behavior, as $n$ goes to infinity, isn't straightforward in full generality. Under the additional assumption that $M$ has bounded diameter, the monotonicity of function $\psi$, displayed in Figure \ref{fig:psi}, implies however that this additional term is upper bounded, independently of $n$, by
\[\frac{1}{\beta}\ln\frac{1}{\psi\left(\mathrm{diam}(M)\sqrt{\frac{-\kappa}{p-1}}\right)},\]
guaranteeing in this case also a regret of order at most $(p\ln n)/\beta$. In the case where $\kappa<0$ and $M$ isn't bounded, an obvious restriction on $(M,d,m)$, allowing for a logarithmic regret, is to impose that
\begin{equation}
\label{eq:infc}
\inf_{x\in M}c_{\kappa,p}(x)=\inf_{x\in M}\int_M \psi\left(d(x,y)\sqrt{\frac{-\kappa}{p-1}}\right)\,m(\mathrm{d}y)>0.
\end{equation}
Finally we note that, as in the Euclidean setting, an important aspect of the above regret bound is that it does not require the losses to be Lipschitz. As a remark, we show however that, at the price of this additional requirement, we can obtain a similar regret bound under a more general assumption.

\begin{pro}
\label{pro:ewb2}
Assume that \ref{A1} and \ref{A2} hold. Assume that there exists constants $p,r_0>0$ and a function $c:M\to\R_+$ such that
\begin{equation}
\label{eq:mesballs}
\forall x\in M,\forall r\in(0,r_0],\quad  m(B(x,r))>c(x)r^p.
\end{equation}
Assume finally that there exist $\beta,L>0$ such that every $\ell\in\mathcal L$ is geodesically $\beta$-expconcave and $L$-Lipschitz. Then taking $\beta_t=\beta$ for all $t\ge 1$, the regret of the \textsc{ewb} forecaster satisfies, for all $n\ge 1/r_0$,
\[R_n\le L+\frac{1}{\beta}\ln\frac{1}{c(x^*_n)}+\frac{p\ln n}{\beta},\]
for $x^*_n\in M$ as in \eqref{eq:xstarn}. 
\end{pro}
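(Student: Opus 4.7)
The plan is to follow the classical potential-function argument for the \textsc{ewa} forecaster, with Jensen's inequality replaced by Assumption \ref{A2} and the role of the Lebesgue measure of a small ball replaced by the local volume bound \eqref{eq:mesballs}. Write $L_t:=\sum_{s=1}^{t}\ell_s$ and $W_t:=\int_M e^{-\beta L_t}\,\mathrm{d}m$, with the convention $W_0=1$. A direct verification shows that $\mathrm{d}m_t=(e^{-\beta L_{t-1}}/W_{t-1})\mathrm{d}m$, so that $Z_t:=\int_M e^{-\beta\ell_t}\,\mathrm{d}m_t=W_t/W_{t-1}$ and consequently $\prod_{t=1}^{n}Z_t=W_n$.

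The first step is a per-round Jensen step. Because $\ell_t$ is geodesically $\beta$-expconcave, the function $-e^{-\beta\ell_t}$ is geodesically convex, and since $e^{-\beta\ell_t}\in L^{1}(m_t)$ (which is implicit in the definition of the \textsc{ewb} forecaster), Assumption \ref{A2} applied to the barycenter $x_t$ of $m_t$ yields $-e^{-\beta\ell_t(x_t)}\le -Z_t$, i.e.\
\[
\ell_t(x_t)\le-\tfrac{1}{\beta}\ln Z_t.
\]
Summing from $t=1$ to $n$ and using the telescoping identity above,
\[
\sum_{t=1}^{n}\ell_t(x_t)\le-\tfrac{1}{\beta}\ln W_n.
\]

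The second step is a ball-restriction lower bound on $W_n$. For any $r\in(0,r_0]$, restrict the integral to $B(x^*_n,r)$. The $L$-Lipschitz property of each $\ell_t$ gives $L_n(y)\le L_n(x^*_n)+nLr$ for every $y\in B(x^*_n,r)$, and the volume hypothesis \eqref{eq:mesballs} then yields
\[
W_n\ge e^{-\beta L_n(x^*_n)-\beta nLr}\,m(B(x^*_n,r))\ge e^{-\beta L_n(x^*_n)-\beta nLr}\,c(x^*_n)\,r^{p}.
\]
Taking logarithms, substituting into the Jensen bound, and subtracting $L_n(x^*_n)$ produces
\[
R_n\le nLr+\tfrac{1}{\beta}\ln\tfrac{1}{c(x^*_n)}-\tfrac{p}{\beta}\ln r.
\]
The claimed bound follows by choosing $r=1/n$, which is admissible precisely under the hypothesis $n\ge 1/r_0$.

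There is no real technical obstacle: the argument is essentially bookkeeping once the two ingredients (Jensen via \ref{A2} and a small-ball volume lower bound) are in place. The only subtle points are verifying that Assumption \ref{A2} applies to $-e^{-\beta\ell_t}$ with measure $m_t$ (which reduces to integrability of $e^{-\beta\ell_t}$ against $m_t$, implicit in well-posedness of the forecaster) and checking that $m_t\in\mathcal P_2(M)$ at each step so that the barycenter $x_t$ exists via \ref{A1}; both are routine consequences of the Lipschitz control on $\ell_t$ and the fact that $m\in\mathcal P_2(M)$.
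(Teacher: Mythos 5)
Your proof is correct and follows essentially the same route as the paper's: a per-round application of Assumption \ref{A2} to the geodesically concave function $e^{-\beta\ell_t}$, telescoping to $-\tfrac{1}{\beta}\ln W_n$, and then a small-ball lower bound on $W_n$ at $x^*_n$ using the Lipschitz property and \eqref{eq:mesballs}, with $r=1/n$. The paper phrases the last step via the Gibbs variational principle with the normalized restriction of $m$ to $B(x^*_n,r)$ as test measure (its Lemma \ref{lem:ewb2}), which is computationally identical to your direct restriction of the integral defining $W_n$.
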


We show next that property \eqref{eq:mesballs} is indeed more general than the measure contraction property.

\begin{pro}
\label{pro:mcplessgen}
Suppose that $(M,d,m)$ satisfies the measure contraction property \textsc{mcp}$(\kappa,p)$ for some $\kappa\in \R$ and some $p>1$. Then, there exists $p,r_0>0$ and $c:M\to\R_+$ such that property \eqref{eq:mesballs} holds.
\end{pro}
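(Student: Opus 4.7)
\textbf{Proof plan for Proposition \ref{pro:mcplessgen}.}

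The strategy is to apply the \textsc{mcp}$(\kappa,p)$ inequality to a fixed small ball around $x$, using the contraction to extract a ball of arbitrary small radius. Concretely, fix a constant $R_0 > 0$, chosen small enough that $R_0 < \pi\sqrt{(p-1)/\kappa}$ when $\kappa > 0$ (and arbitrary otherwise), and set $r_0 := R_0$. For $x \in M$ and $r \in (0, r_0]$, let $\e := r/R_0 \in (0,1]$ and apply the \textsc{mcp}$(\kappa,p)$ inequality with $A := B(x, R_0)$. Since each $h_x(\cdot,\e)$ follows a geodesic emanating from $x$, the scale property \eqref{eq:scalegeod} gives $d(x, h_x(y,\e)) = \e\,d(x,y) \le \e R_0 = r$ for every $y \in A$. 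Hence $A^{\e}_x \subset B(x, r)$, which yields
\[
m(B(x,r)) \;\ge\; m(A^{\e}_x) \;\ge\; \e \int_{B(x,R_0)} \left(\frac{s_\kappa\bigl(\e\,d(x,y)/\sqrt{p-1}\bigr)}{s_\kappa\bigl(d(x,y)/\sqrt{p-1}\bigr)}\right)^{p-1} m(\mathrm{d}y).
\]

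The next step, which is the technical heart of the argument, is to bound the ratio of $s_\kappa$ terms from below by a constant multiple of $\e$, uniformly for $y \in B(x,R_0)$. For $u \in (0, R_0/\sqrt{p-1}]$ and $\e \in (0,1]$, I would proceed case by case. When $\kappa \ge 0$, the function $t \mapsto s_\kappa(t)/t$ is non-increasing on its domain of definition (this is immediate for $\kappa = 0$ and follows from the concavity of $\sin$ on $[0,\pi]$ when $\kappa > 0$), so $s_\kappa(\e u)/s_\kappa(u) \ge \e$ and one may take $C := 1$. When $\kappa < 0$, the function $t \mapsto \sinh(t)/t$ is non-decreasing, so $s_\kappa(u) \le (s_\kappa(U)/U)\,u$ at $U := R_0/\sqrt{p-1}$, while $s_\kappa(\e u) \ge \e u$ since $\sinh(s) \ge s$ for $s \ge 0$; combining these, $s_\kappa(\e u)/s_\kappa(u) \ge \e\,U/s_\kappa(U) =: C\,\e$ with an explicit constant $C > 0$ depending only on $\kappa$, $p$, and $R_0$.

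Combining these bounds, we obtain
\[
m(B(x,r)) \;\ge\; C^{p-1}\,\e^p\, m(B(x,R_0)) \;=\; \frac{C^{p-1}\, m(B(x,R_0))}{R_0^p}\, r^p,
\]
so it suffices to set $c(x) := C^{p-1}\,m(B(x,R_0))/R_0^p$, which is strictly positive since $m(B(x,R_0)) > 0$ by the standing assumption on $m$ recalled before Definition \ref{def:curvb} above. This yields \eqref{eq:mesballs} with the same exponent $p$ as in the \textsc{mcp}. The only genuine difficulty lies in the elementary comparison of $s_\kappa(\e u)$ with $\e\,s_\kappa(u)$ in the negatively-curved case, which is why one must fix a bound $R_0$ a priori: the ratio $s_\kappa(\e u)/s_\kappa(u)$ is only comparable to $\e$ up to a multiplicative factor that degenerates as $u \to \infty$, so a global bound would fail. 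Restricting the integration to $B(x,R_0)$ and trading the rest away is precisely what makes the estimate local, consistent with the local nature of \eqref{eq:mesballs}.
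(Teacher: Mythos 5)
Your proof is correct, and the overall strategy coincides with the paper's: extract from \textsc{mcp}$(\kappa,p)$ a Bishop--Gromov-type lower bound on the volume of small balls by contracting a fixed ball $B(x,R_0)$ with ratio $\e=r/R_0$. The execution differs in two respects. First, the paper outsources the volume comparison to Lemma 2.5 of Ohta (2007), whereas you rederive it directly from the definition of \textsc{mcp} via the inclusion $A^{\e}_x\subset B(x,r)$; that inclusion should be stated for the full-measure set of $y$ on which $h_x(y,\cdot)$ is actually a geodesic (the definition only guarantees this $m$-a.e.), but discarding that null set changes nothing. Second, in the case $\kappa<0$ you bound $s_\kappa(\e u)/s_\kappa(u)$ from below by $\e\,U/s_\kappa(U)$ using $\sinh(v)\ge v$ together with the monotonicity of $v\mapsto \sinh(v)/v$, instead of invoking the paper's Lemma \ref{lem:ewb3}(2) built around the function $\psi(v)=(v\coth v)e^{-v\coth v}$. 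Your estimate is more elementary, holds for all $\e\in(0,1]$ rather than only $\e\le 1/2$, and produces a cleaner constant $c(x)=C^{p-1}m(B(x,R_0))/R_0^p$ in place of the paper's $c(\kappa,p,r_0)\,m(B(x,2r_0))/(2^pr_0^p)$; since Proposition \ref{pro:ewb2} only needs some positive $c(x)$, either constant serves equally well. Your closing observation that the comparability of $s_\kappa(\e u)$ with $\e\,s_\kappa(u)$ degenerates as $u\to\infty$ when $\kappa<0$, forcing the bound to be local, is exactly the phenomenon that the function $\psi$ quantifies in the paper's Theorem \ref{thm:ewb}.
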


We end the paragraph by mentioning results valid for bounded losses that are only geodesically convex. For brevity, the proof is reported in the Appendix.

\begin{thm}
\label{thm:ewb3}
Assume that \ref{A1} and \ref{A2} hold. Suppose that every $\ell\in\mathcal L$ is geodesically convex and $[a,b]$-valued. Then setting, for all $t\ge 1$,
\[\beta_t=\frac{2c_1}{(b-a)}\sqrt{\frac{\ln(t\lor 2)}{t}}\quad\mbox{where}\quad c_1:=\left(\frac{3}{2}\right)^{\frac{1}{4}}\le 1.1,\] the regret of the \textsc{ewb} forecaster satisfies the following.
\begin{enumerate}[label=\rm{(\arabic*)}]
\item If Assumption {\normalfont\ref{A3}} holds, then for all $n\ge 2$,
\[\frac{R_n}{b-a}\le 1+c_1\left(1+\ln\frac{1}{c_{\kappa,p}(x^{\star}_n)}\right)\sqrt{pn\ln n},\]
where $c_{\kappa,p}$ and $x^*_n$ are as in Theorem \ref{thm:ewb}.
\item If more generally property \eqref{eq:mesballs} holds and all losses $\ell\in\mathcal L$ are also $L$-Lipschitz, then for all $n\ge 1/r_0$, 
\[\frac{R_n}{b-a}\le \frac{L}{b-a}+c_1\left(1+\ln\frac{1}{c(x^*_n)}\right)\sqrt{pn\ln n},\]
for $x^*_n\in M$ as in \eqref{eq:xstarn}.
\end{enumerate}
\end{thm}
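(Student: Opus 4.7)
The plan is to mirror the proof of Theorem \ref{thm:ewb} while substituting Hoeffding's lemma for the expconcavity-based per-round bound. Since losses are only geodesically convex and $[a,b]$-valued, an additive error of order $\beta_{t+1}(b-a)^2/8$ is incurred per round, and the decreasing schedule $\beta_t\sim\sqrt{\ln t/t}$ is chosen so that the accumulated Hoeffding error balances the logarithmic term produced by the measure of a small ball around $x^*_n$; the constant $c_1=(3/2)^{1/4}$ emerges from the optimization of this trade-off.

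The first step combines Jensen and Hoeffding. By geodesic convexity of $\ell_t$ and assumption \ref{A2} applied to the barycenter $x_t$ of $m_t$,
\[
\ell_t(x_t)\le\int_M\ell_t\,\mathrm{d}m_t.
\]
Since $\ell_t$ is $[a,b]$-valued, Hoeffding's lemma applied to $\ell_t$ under $m_t$ yields
\[
\int_M\ell_t\,\mathrm{d}m_t\le-\frac{1}{\beta_{t+1}}\ln Z_t+\frac{\beta_{t+1}(b-a)^2}{8},\mywhere Z_t:=\int_M e^{-\beta_{t+1}\ell_t}\,\mathrm{d}m_t.
\]

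The second step bounds $-\sum_t\ln Z_t$ via the iterative structure. Unrolling the recursive definition of $m_{t+1}$ produces the identity
\[
\prod_{t=1}^nZ_t\,\mathrm{d}m_{n+1}(x)=e^{-\sum_{t=1}^n\beta_{t+1}\ell_t(x)}\,\mathrm{d}m(x),
\]
so that $m_{n+1}(B)\le1$ applied to $B:=B(x^*_n,r)$ gives $\prod_tZ_t\ge\int_Be^{-\sum_t\beta_{t+1}\ell_t}\,\mathrm{d}m$. For part (2), Lipschitz continuity bounds $\ell_t(x)\le\ell_t(x^*_n)+Lr$ on $B$, and property \eqref{eq:mesballs} provides $m(B)\ge c(x^*_n)r^p$ whenever $r\le r_0$. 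For part (1), one instead follows the \textsc{mcp}-based argument of Theorem \ref{thm:ewb}: the geodesic homothety $h_{x^*_n}$ provided by \ref{A3}, combined with the geodesic convexity of each $\ell_t$ along contracted geodesics, produces the $c_{\kappa,p}(x^*_n)$ factor without requiring Lipschitzness.

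The main technical obstacle, absent when $\beta$ is constant, is reconciling the weighted sum $-\sum_t\ln Z_t/\beta_{t+1}$ arising from Hoeffding with the unweighted $-\sum_t\ln Z_t$ naturally bounded by the iterative identity. The standard remedy is to multiply the per-round inequality by $\beta_{t+1}$ before summation, reducing the analysis to controlling $\sum_t\beta_{t+1}[\ell_t(x_t)-\ell_t(x^*_n)]$, and then to recover the unweighted regret $R_n$ using $|\ell_t(x_t)-\ell_t(x^*_n)|\le b-a$ together with the monotonicity of $\beta_t$. Once this is done, choosing $r=O(1/n)$ and estimating $\sum_{t=1}^n\sqrt{\ln(t\lor 2)/t}$ by an integral comparison against $2\sqrt{n\ln n}$ produces the stated bounds, with the specific value of $c_1$ coming from balancing the two dominant $\sqrt{n\ln n}$ contributions.
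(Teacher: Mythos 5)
Your first step is exactly the paper's: Jensen's inequality \ref{A2} at the barycenter gives $\ell_t(x_t)\le\bar\ell_t:=\int_M\ell_t\,\mathrm{d}m_t$, and Hoeffding's lemma converts $\bar\ell_t$ into $-\beta_t^{-1}\ln\int_M e^{-\beta_t\ell_t}\,\mathrm{d}m_t$ plus an additive $\beta_t(b-a)^2/8$; the small-ball and \textsc{mcp} estimates you invoke at the end are also the right ones. The gap is in your second step, precisely at the point you yourself flag as ``the main technical obstacle.'' Multiplying the per-round inequality by $\beta_{t+1}$ yields a bound on the \emph{weighted} regret $\sum_{t=1}^n\beta_{t+1}\delta_t$ with $\delta_t:=\ell_t(x_t)-\ell_t(x^*_n)$, and your proposed conversion back to $R_n=\sum_t\delta_t$ via $|\delta_t|\le b-a$ and monotonicity of $\beta_t$ does not close. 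Concretely, the best such conversion is
\begin{equation*}
\sum_{t=1}^n\delta_t\;=\;\sum_{t=1}^n\frac{\beta_{t+1}}{\beta_{n+1}}\delta_t+\sum_{t=1}^n\Bigl(1-\frac{\beta_{t+1}}{\beta_{n+1}}\Bigr)\delta_t\;\le\;\frac{1}{\beta_{n+1}}\sum_{t=1}^n\beta_{t+1}\delta_t+(b-a)\Bigl(\frac{\sum_{t=1}^n\beta_{t+1}}{\beta_{n+1}}-n\Bigr),
\end{equation*}
using $\delta_t\ge-(b-a)$ and $\beta_{t+1}\ge\beta_{n+1}$ (one cannot simply divide by $\beta_{n+1}$ because $\delta_t$ need not be nonnegative: $x^*_n$ minimizes $L_n$, not each $\ell_t$). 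With $\beta_t\asymp\sqrt{\ln t/t}$ one has $\sum_{t\le n}\beta_{t+1}/\beta_{n+1}\asymp 2n$, so the correction term is of order $(b-a)\,n$ --- linear regret. An Abel-summation variant fails for the same reason, since the available lower bound on the partial weighted regrets is again $-(b-a)\sum_{s\le t}\beta_{s+1}$. The same difficulty contaminates the comparator side of your unrolled identity, which produces $\sum_t\beta_{t+1}\ell_t(x^*_n)$ rather than $\beta\,L_n(x^*_n)$ for a single rate.

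The device that the paper uses to avoid this entirely is Lemma \ref{lem:ewb1} (the Gy\"orfi--Ottucs\'ak argument), which you do not reproduce and which cannot be replaced by the weighted-regret detour. One keeps the per-round inequality in the unweighted form $\ell_t(x_t)\le-\beta_t^{-1}\ln\int_M e^{-\beta_t\ell_t}\,\mathrm{d}m_t+\beta_t(b-a)^2/8$ and shows, via the decomposition through $W_t=\int_M e^{-\beta_tL_{t-1}}\,\mathrm{d}m$ and Jensen's inequality for $x\mapsto x^{\beta_{t+1}/\beta_t}$ (which is where $\beta_{t+1}\le\beta_t$ enters), that
\begin{equation*}
-\sum_{t=1}^n\frac{1}{\beta_t}\ln\int_M e^{-\beta_t\ell_t}\,\mathrm{d}m_t\;\le\;-\frac{1}{\beta_{n+1}}\ln\int_M e^{-\beta_{n+1}L_n}\,\mathrm{d}m.
\end{equation*}
The Gibbs variational principle then puts the prior term $\mathcal E(\mu|m)/\beta_{n+1}$ against the single rate $\beta_{n+1}$, and the rest of your outline (restriction of $m$ to $M_\e$ or to $B(x^*_n,r)$ with $\e=r=1/n$, convexity plus boundedness to control $L_n(h(y,\e))-L_n(x^*_n)\le\e n(b-a)$, and $\sum_t\beta_t^2(b-a)^2/8$ from Hoeffding) goes through as in Theorem \ref{thm:ewb}, yielding the stated $\sqrt{n\ln n}$ rate. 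Without Lemma \ref{lem:ewb1} or an equivalent handling of the time-varying rate, your argument as written does not establish the theorem.
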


\subsection{Online to batch conversion}
\label{subsec:otbc}
The principle of online-to-batch conversion is a classical way to exploit algorithms, developed for sequential prediction, in the context of statistical learning. This section presents a simple adaptation of this procedure in the context of metric spaces.

Consider the following statistical learning problem. Let $(M,d)$ be a metric space, let $\mathcal Z$ be an arbitrary measurable space and let $\ell:M\times \mathcal Z\to \R$ be a fixed loss function. Suppose given a collection (or batch) $\{Z_{i}\}_{i=1}^n$ of independent and identically distributed $\mathcal Z$-valued random variables with same distribution as (and independent from) a generic random variable $Z$. Finally, consider the task of constructing $\hat\theta_n\in M$ based on $\{Z_{i}\}_{i=1}^n$ and such that the excess risk
\[\esp[\ell(\hat\theta_n,Z)]-\inf_{\theta\in M}\esp[\ell(\theta,Z)],\]
is as small as possible.

To that aim, consider first the problem of online optimization studied so far with decision space $M$ and loss functions $\mathcal L=\{\ell(.,z):z\in\mathcal Z\}$. In this setting, at each round $t\ge 1$ the player first chooses a point $\theta_t\in M$, the environment then reveals a point $z_t$, the player incurs loss $\ell(\theta_t,z_t)$ and moves on to round $t+1$. Here, the players strategy $(\theta_t)_{t\ge 1}$ can be formally associated to a sequence of measurable maps $(\vartheta_t)_{t\ge 1}$ where $\vartheta_1$ is constantly equal to $\theta_1$ and, for $t\ge 2$, $\vartheta_t:\mathcal{Z}^{t-1}\to M$ is such that
\[\theta_t=\vartheta_t(z_1,\dots,z_{t-1}).\]

Now, suppose given such a prediction strategy $(\vartheta_t)_{t\ge 1}$ and suppose that, for all $n\ge 1$, there exists $B_n>0$ satisfying 
\begin{equation}
\label{eq:boundforotbc}
\sum_{t=1}^n\ell( \theta_t,z_t)-\inf_{\theta\in M}\sum_{t=1}^n\ell(\theta,z_t)\le B_n,
\end{equation}
uniformly over the outcome sequence $(z_1,\dots,z_n)\in\mathcal Z^n$. 

Then, coming back to the statistical learning problem, consider $\hat\theta_n$ to be a barycenter of the (random) probability measure 
\[\frac{1}{n+1}\delta_{\vartheta_1}+\frac{1}{n+1}\sum_{t=2}^{n+1}\delta_{\vartheta_t(Z_1,\dots,Z_{t-1})},\]
on $M$, i.e.,
\begin{equation}
    \label{eq:defthetan}
\hat\theta_n\in\underset{\theta\in M}{\arg\min}\left\{d^2(\theta,\vartheta_1)+\sum_{t=2}^{n+1} d^2(\theta,\vartheta_t(Z_1,\dots,Z_{t-1}))\right\}.
\end{equation}
Then we have the following result.
\begin{thm}
\label{thm:otbc}
Suppose that the metric space $(M,d)$ satisfies Assumptions \ref{A1} and \ref{A2}. Suppose that \eqref{eq:boundforotbc} holds uniformly over the outcome sequence $(z_1,\dots,z_n)\in\mathcal Z^n$ and that, for all $z\in\mathcal Z$, the function $\ell(.,z):M\to \R$ is geodesically convex. 
Then, for all $n\ge 1$,
\[\esp[\ell(\hat\theta_n,Z)]-\inf_{\theta\in M}\esp[\ell(\theta,Z)]\le \frac{B_{n+1}}{n+1}.\]
\end{thm}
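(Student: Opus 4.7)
The plan is to combine Jensen's inequality for barycenters (Assumption~\ref{A2}) with the worst-case regret bound \eqref{eq:boundforotbc}, using the classical symmetrization trick in which the test sample $Z$ plays the role of the $(n+1)$-th outcome in the online game. Throughout, set $\theta_1:=\vartheta_1$, $\theta_t:=\vartheta_t(Z_1,\dots,Z_{t-1})$ for $t\ge 2$, and denote $r(\theta):=\esp[\ell(\theta,Z)]$. The empirical measure $\mu_n:=\frac{1}{n+1}\sum_{t=1}^{n+1}\delta_{\theta_t}$ is finitely supported, hence lies in $\mathcal P_2(M)$, so \ref{A1} guarantees that a barycenter $\hat\theta_n$ of $\mu_n$ exists; and since every $\ell(\cdot,z)$ is geodesically convex and takes finite values on the support of $\mu_n$, \ref{A2} yields the pointwise inequality
\[\ell(\hat\theta_n,z)\le\frac{1}{n+1}\sum_{t=1}^{n+1}\ell(\theta_t,z)\]
for every $z\in\mathcal Z$, almost surely in $(Z_1,\dots,Z_n)$.

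The next step is to evaluate this at $z=Z$ and take expectation. Because $\theta_t$ is measurable with respect to $(Z_1,\dots,Z_{t-1})$ and $Z\indep(Z_1,\dots,Z_n)$ with $Z\sim Z_1$, the tower property gives $\esp[\ell(\theta_t,Z)\mid Z_1,\dots,Z_{t-1}]=r(\theta_t)$ and similarly $\esp[\ell(\hat\theta_n,Z)\mid Z_1,\dots,Z_n]=r(\hat\theta_n)$, which produces
\[\esp[r(\hat\theta_n)]\le\frac{1}{n+1}\sum_{t=1}^{n+1}\esp[r(\theta_t)].\]
I would then exploit \eqref{eq:boundforotbc} on the outcome sequence $(Z_1,\dots,Z_n,Z)$: for any fixed $\theta\in M$,
\[\sum_{t=1}^{n}\ell(\theta_t,Z_t)+\ell(\theta_{n+1},Z)\le\sum_{t=1}^{n}\ell(\theta,Z_t)+\ell(\theta,Z)+B_{n+1}.\]
Taking expectation and using that, for each $t\le n$, $\theta_t\indep Z_t$ with $Z_t\sim Z$, so that $\esp[\ell(\theta_t,Z_t)]=\esp[r(\theta_t)]$, and likewise $\esp[\ell(\theta_{n+1},Z)]=\esp[r(\theta_{n+1})]$, yields
\[\sum_{t=1}^{n+1}\esp[r(\theta_t)]\le (n+1)\,r(\theta)+B_{n+1}.\]
Chaining this with the Jensen bound and taking the infimum over $\theta\in M$ will deliver the stated excess-risk inequality.

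The only genuinely delicate point is the symmetrization: identifying the outcome space of the online game with the sample space of the statistical problem, and then feeding the fresh sample $Z$ itself as the $(n+1)$-th adversarial outcome, is what turns the deterministic regret inequality into a statement about $r(\theta_{n+1})$ after taking expectations. Once this identification is in place, every remaining step reduces to Jensen's inequality, independence, and the tower property, and no geometric structure beyond \ref{A1} and \ref{A2} is used.
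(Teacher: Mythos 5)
Your proposal is correct and follows essentially the same route as the paper's proof: Jensen's inequality via \ref{A2} applied to the empirical measure of the iterates, the regret bound \eqref{eq:boundforotbc} evaluated on the sequence $(Z_1,\dots,Z_n,Z)$ with the fresh sample as the $(n+1)$-th outcome, and independence plus the tower property to replace $\esp[\ell(\theta_t,Z_t)]$ by $\esp[r(\theta_t)]$. The only cosmetic difference is that the paper bounds $\esp[\inf_\theta(\cdot)]$ by $\inf_\theta\esp[\cdot]$ directly, whereas you fix $\theta$ first and take the infimum at the end; the two are equivalent.
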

Combining Theorems \ref{thm:ewb} and \ref{thm:otbc}, we readily obtain the following result. A similar adaptation of Theorem \ref{thm:ewb3} is left to the reader.

\begin{cor}
\label{cor:otbc2}
Suppose that $(M,d)$ satisfies Assumptions \ref{A1} and \ref{A2}. Let $m$ be a prior distribution over $M$ such that $(M,d,m)$ satisfies Assumption \ref{A3} and condition \eqref{eq:infc} if $\kappa<0$. Suppose that there exists $\beta>0$ such that, for all $z\in\mathcal Z$, the function $\ell(.,z):M\to \R$ is geodesically $\beta$-expconcave. Let $\hat\theta_n$ be as in \eqref{eq:defthetan} where $\vartheta_1$ is a barycenter of $m$ and, for $2\le t\le n+1$, $\vartheta_t(Z_1,\dots,Z_{t-1})$ is a barycenter of the (random) probability measure $m_{t}$ defined by 
\[\mathrm{d}m_{t}(\theta)=\frac{e^{-\beta\ell(\theta,Z_{t-1})}}{\int_M e^{-\beta\ell(.,Z_{t-1})}\mathrm{d}m_{t-1}}\mathrm{d}m_{t-1}(\theta),\]
where $m_1:=m$. Then, for all $n\ge 2$,
\[\esp[\ell(\hat\theta_n,Z)]-\inf_{\theta\in M}\esp[\ell(\theta,Z)]\le \frac{2+\ln(\inf_M c_{\kappa,p})^{-1}}{\beta (n+1)}+\frac{p\ln (n+1)}{\beta (n+1)}.\]
\end{cor}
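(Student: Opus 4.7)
The plan is to simply compose Theorem \ref{thm:ewb} (applied conditionally, for each outcome sequence) with the online-to-batch conversion provided by Theorem \ref{thm:otbc}. First, I would unfold the definitions: the sequence of maps $(\vartheta_t)_{t\ge 1}$ describes exactly the \textsc{ewb} forecaster run on the loss sequence $\ell_t:=\ell(\cdot,Z_t)$, and $\hat\theta_n$ is the barycenter of the empirical measure of $\vartheta_1,\dots,\vartheta_{n+1}$ appearing in \eqref{eq:defthetan}. So if I can produce a deterministic regret bound $B_n$ as in \eqref{eq:boundforotbc} that holds uniformly in the outcome sequence $(z_1,\dots,z_n)\in\mathcal Z^n$, the conclusion will follow directly from Theorem \ref{thm:otbc}, once we check its hypotheses.

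Next I would verify the hypotheses of Theorem \ref{thm:otbc}: Assumptions \ref{A1} and \ref{A2} are assumed, and geodesic convexity of each $\ell(\cdot,z)$ follows from geodesic $\beta$-expconcavity together with Lemma \ref{lem:lcisec}(1). To produce the required bound, fix any $(z_1,\dots,z_n)\in\mathcal Z^n$ and view it as a deterministic outcome sequence for the online problem with loss class $\mathcal L=\{\ell(\cdot,z):z\in\mathcal Z\}$. Assumptions \ref{A1}--\ref{A3} and uniform $\beta$-expconcavity are in force, so Theorem \ref{thm:ewb} applies and yields, for every $n\ge 2$,
\[
\sum_{t=1}^{n}\ell(\theta_t,z_t)-\inf_{\theta\in M}\sum_{t=1}^{n}\ell(\theta,z_t)\le \frac{1}{\beta}\left(2+\ln\frac{1}{c_{\kappa,p}(x^*_n)}\right)+\frac{p\ln n}{\beta},
\]
where $x^*_n$ is the (sequence-dependent) minimizer. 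To obtain a bound independent of $(z_1,\dots,z_n)$ I would then bound $c_{\kappa,p}(x^*_n)$ from below by $\inf_{x\in M}c_{\kappa,p}(x)$; when $\kappa\ge 0$ this infimum is $1$ by definition of $c_{\kappa,p}$, and when $\kappa<0$ it is strictly positive by the standing assumption \eqref{eq:infc}. Thus one may take
\[
B_n:=\frac{1}{\beta}\left(2+\ln\frac{1}{\inf_{M}c_{\kappa,p}}\right)+\frac{p\ln n}{\beta},
\]
valid uniformly in the outcome sequence and for all $n\ge 2$.

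Finally I would substitute this uniform $B_{n+1}$ into the conclusion of Theorem \ref{thm:otbc}, i.e.,
\[
\esp[\ell(\hat\theta_n,Z)]-\inf_{\theta\in M}\esp[\ell(\theta,Z)]\le \frac{B_{n+1}}{n+1},
\]
which is exactly the claimed inequality after dividing through. There is essentially no technical obstacle beyond the bookkeeping: the only subtle point is ensuring that the Theorem \ref{thm:ewb} bound, which a priori depends on the random optimum $x^*_n$, can be replaced by a deterministic quantity, and this is precisely the role of the hypothesis \eqref{eq:infc} in the $\kappa<0$ regime.
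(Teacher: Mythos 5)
Your proposal is correct and matches the paper's intended argument exactly: the paper gives no separate proof, stating only that the corollary follows by combining Theorems \ref{thm:ewb} and \ref{thm:otbc}, which is precisely the composition you carry out. Your additional care in checking the hypotheses (geodesic convexity via Lemma \ref{lem:lcisec}(1) and the uniformization of the bound over $x^*_n$ via \eqref{eq:infc}) fills in exactly the bookkeeping the paper leaves implicit.
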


\subsection{Example: The $2$-Wasserstein space over $\R^d$}
Consider the set $\mathcal P_2(\R^d)$ of all square integrable (Borel) probability measures over $\R^{d}$. 
Given $\mu,\nu\in\mathcal P_2(\R^d)$, denote $\Pi_{\mu,\nu}$ the set of couplings between $\mu$ and $\nu$, i.e., the set of probability measures $\pi$ on $\R^d\times\R^d$ satisfying $\pi(.\times\R^d)=\mu$ and $\pi(\R^d\times .)=\nu$. The $2$-Wasserstein metric $W_2$ on $\mathcal P_2(\R^d)$ is defined by 
 \[W^2_2(\mu,\nu):=\inf_{\pi\in\Pi_{\mu,\nu}}\int_{\R^d\times\R^d}\|x-y\|^2_2\,{\rm d}\pi(x,y).\]
The metric space $(\mathcal P_2(\R^d),W_2)$ is called the $2$-Wasserstein space over $\R^d$. It is  complete, separable, geodesic and has non-negative curvature in the sense of Definition \ref{def:curvb}~\citep[Proposition 2.10]{Stu06I}. The existence and uniqueness of barycenters of probability measures on $\mathcal P_2(\R^d)$ have been established under general conditions. For instance~\citet{GouLou17} show that, for any $P\in\mathcal P_2(\mathcal P_2(\R^d))$,
\begin{itemize}
    \item $P$ admits a barycenter,
    \item if there exists a Borel subset $B\subset \mathcal P_2(\R^d)$ such that $P(B)>0$ and such that every $\mu\in B$ has a density with respect to the Lebesgue measure, then this barycenter is unique.
 \end{itemize}
Assumptions \ref{A1}-\ref{A3} are therefore verified for a subset $M\subset \mathcal P_2(\R^d)$ equipped with the Wasserstein metric provided the following conditions are satisfied: 
\begin{enumerate}[label=\rm{(\arabic*)}]
\item $M$ is a geodesically convex subset of $\mathcal P_2(\R^d)$ in the sense that for any $\mu,\nu\in M$ and any geodesic $\gamma:[0,1]\to \mathcal P_2(\R^d)$ connecting $\mu$ to $\nu$, the image of $\gamma$ is included in $M$,
\item $M$ has finite Hausdorff dimension $p$
\end{enumerate}

We end with a result of special interest for online learning with the $W_2$ loss in the $2$-Wasserstein space. Recall first that, according to the Kantorovich dual representation of $W_2$, for all $\mu,\nu\in\mathcal P_2(\R^D)$,
\[W^2_2(\mu,\nu)=\sup_{\varphi}\int(\|.\|^2-2\varphi)\,{\rm d}\mu+\int(\|.\|^2-2\varphi^*)\,{\rm d}\nu,\]
where $\varphi:\R^D\to\R\cup\{-\infty\}$ ranges over the set of proper and lower semi-continuous convex functions and $\varphi^*$ denotes the Fenchel-Legendre conjugate of $\varphi$. In addition, the supremum is always attained and a $\varphi_{\mu\to\nu}$ achieving the max is called an optimal Kantorovich potential.   
\begin{lem}
\label{lem:w2mixable}
Let $P\in\mathcal P_2(\mathcal P_2(\R^D))$ and suppose $\Delta:=\mathrm{diam}(\mathrm{supp}(P))<+\infty$. Let $\mu^*$ be a barycenter of $P$ and suppose that, for $P$-almost all $\nu$, $\varphi_{\mu^*\to\nu}$ is $\alpha(\nu)$-strongly convex for a measurable $\alpha:\mathcal P_2(\R^D)\to \R_+$. Then $\mu^*$ is unique and, for all $\mu$ in the support of $P$, and all \[0<\beta\le\frac{8\mathcal V^*_P}{\Delta^4},\]
we have
\begin{equation}
\label{eq:vi}
W^2_2(\mu^*,\mu)\le -\frac{C^{-1}_{\rm var}}{\beta}\ln\int\exp(-\beta W^2_2(\mu,.))\,\mathrm{d}P,
\end{equation}
where $C_{\rm var}:=\int \alpha(\nu){\rm d}P(\nu)$.
\end{lem}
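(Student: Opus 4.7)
The plan is to reduce the exponential inequality \eqref{eq:vi} to two more elementary facts: a variance-type inequality for the Fréchet functional $\mathcal V_P\colon\mu\mapsto\int W_2^2(\mu,\nu)\,\mathrm{d}P(\nu)$ at its minimizer $\mu^*$, and a Hoeffding-type comparison between the log-Laplace transform of a bounded random variable and its mean.

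First I would prove the variance inequality
\[
C_{\rm var}\,W_2^2(\mu^*,\mu)\;\le\;\mathcal V_P(\mu)-\mathcal V_P^{*},\qquad(\dagger)
\]
for all $\mu\in\mathcal P_2(\R^D)$. This is a strong-convexity-at-the-barycenter estimate for $\mathcal V_P$; in the Wasserstein setting it will be derived from the $\alpha(\nu)$-strong convexity of the Brenier potentials. The aggregated potential $\Phi(x):=\int\varphi_{\mu^*\to\nu}(x)\,\mathrm{d}P(\nu)$ is then $C_{\rm var}$-strongly convex, while the first-order optimality of the barycenter reads $\nabla\Phi=\mathrm{id}$ on $\mathrm{supp}(\mu^*)$, so that $\Phi=\tfrac12\|\cdot\|^2+\mathrm{const}$ there. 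Combining these two facts with the Brenier representation $T_{\mu^*\to\mu}=\nabla\psi$ of the optimal transport to $\mu$ produces the quadratic $C_{\rm var}W_2^2(\mu^*,\mu)$ lower bound for $\mathcal V_P(\mu)-\mathcal V_P^{*}$. Uniqueness of $\mu^*$ is then an immediate corollary of $(\dagger)$ applied to any second barycenter $\tilde\mu^*$, which forces $C_{\rm var}W_2^2(\mu^*,\tilde\mu^*)\le 0$.

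Next, for $\mu\in\mathrm{supp}(P)$, the random variable $\nu\mapsto W_2^2(\mu,\nu)$ under $P$ takes values in $[0,\Delta^2]$; Hoeffding's lemma therefore gives
\[
-\frac{1}{\beta}\ln\int e^{-\beta W_2^2(\mu,\nu)}\,\mathrm{d}P(\nu)\;\ge\;\mathcal V_P(\mu)-\frac{\beta\Delta^4}{8}.
\]
Substituting the bound $\mathcal V_P(\mu)\ge C_{\rm var}W_2^2(\mu^*,\mu)+\mathcal V_P^{*}$ from $(\dagger)$ and invoking the hypothesis $\beta\le 8\mathcal V_P^{*}/\Delta^4$ (so that $\mathcal V_P^{*}-\beta\Delta^4/8\ge 0$) leaves $C_{\rm var}W_2^2(\mu^*,\mu)$ on the right-hand side. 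Dividing by $C_{\rm var}$ yields \eqref{eq:vi}.

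The main obstacle is establishing $(\dagger)$ rigorously. The Kantorovich dual lower bound $W_2^2(\mu,\nu)-W_2^2(\mu^*,\nu)\ge \int(\|x\|^2-2\varphi_{\mu^*\to\nu})\,\mathrm{d}(\mu-\mu^*)$, while the natural starting point, does not couple cleanly with strong convexity of $\Phi$: applying $\Phi(y)\ge \Phi(x)+\langle\nabla\Phi(x),y-x\rangle+\tfrac{C_{\rm var}}{2}\|y-x\|^2$ with $y=T_{\mu^*\to\mu}(x)$ yields an \emph{upper} bound on $-2\int\Phi\,\mathrm{d}(\mu-\mu^*)$, whereas the conclusion requires a lower bound. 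Extracting the correct sign hinges on exploiting the identity $\Phi=\tfrac12\|\cdot\|^2+\mathrm{const}$ on $\mathrm{supp}(\mu^*)$ together with the $C_{\rm var}$-strong monotonicity of $\nabla\Phi$ off the support, whose delicate interaction along the transport rays of $T_{\mu^*\to\mu}$ is what ultimately isolates the quadratic $C_{\rm var}W_2^2(\mu^*,\mu)$ term after the first-order correction is eliminated by the Euler--Lagrange condition.
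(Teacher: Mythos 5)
Your overall architecture is exactly the paper's: reduce \eqref{eq:vi} to (i) the quadratic growth (``variance'') inequality $C_{\rm var}W_2^2(\mu^*,\mu)\le \int W_2^2(\mu,\cdot)\,\mathrm{d}P-\mathcal V_P^*$ and (ii) a Hoeffding bound $\frac{1}{\beta}\ln\int e^{-\beta\mathcal E_\mu}\,\mathrm{d}P\le \beta\Delta^4/8$ for the centered variable $\mathcal E_\mu(\nu)=W_2^2(\mu,\nu)-\int W_2^2(\mu,\cdot)\,\mathrm{d}P$, then combine them using $\beta\le 8\mathcal V_P^*/\Delta^4$. Steps (ii) and the final combination are correct and match the paper's Lemma \ref{lem:w2mixable:gen} and its closing remark; your uniqueness argument from $(\dagger)$ is also the standard one.

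The gap is in $(\dagger)$ itself. The paper does not prove it: it imports it verbatim as Lemma \ref{lem:vi}, i.e.\ Theorem~6 of \citet{Che20}, whose hypotheses (strong convexity of the potentials $\varphi_{\mu^*\to\nu}$) are precisely those of the statement. You instead attempt a direct proof via the aggregated potential $\Phi=\int\varphi_{\mu^*\to\nu}\,\mathrm{d}P(\nu)$, and your own last paragraph concedes that the natural combination of the Kantorovich dual lower bound with $C_{\rm var}$-strong convexity of $\Phi$ produces an inequality of the wrong sign; the claim that the ``delicate interaction along the transport rays'' ultimately isolates the quadratic term is not an argument but a placeholder for the one genuinely nontrivial step of the whole lemma. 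As written, the proposal therefore does not establish $(\dagger)$, and everything downstream rests on it. The fix is either to cite the known variance inequality, as the paper does, or to actually carry out the Chewi et al.\ argument, which does not proceed through the sign manipulation you describe: it evaluates the dual functional $\nu\mapsto\int(\|x\|^2-2\varphi_{\mu^*\to\nu})\,\mathrm{d}\mu+\int(\|x\|^2-2\varphi_{\mu^*\to\nu}^*)\,\mathrm{d}\nu$ as a suboptimal competitor in the dual formulation of $W_2^2(\mu,\nu)$, and uses $\alpha(\nu)$-strong convexity of $\varphi_{\mu^*\to\nu}$ only through the resulting smoothness of the conjugate $\varphi_{\mu^*\to\nu}^*$, integrated against $P$ and combined with the barycenter optimality condition $\int\varphi_{\mu^*\to\nu}\,\mathrm{d}P(\nu)=\tfrac12\|\cdot\|^2+\mathrm{const}$ on the support of $\mu^*$.
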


\section{Proofs}
\label{sec:proofs}

 \subsection{Proof of Lemma \ref{lem:lcisec}} 
$(1)$ Suppose that $f$ is $\beta$-expconcave for some $\beta>0$. Then, since the logarithm is increasing and concave, we get that, for any geodesic $\gamma:[0,1]\to M$ and all $t\in[0,1]$, 
\begin{align*}
f(\gamma_t)&=-\frac{1}{\beta}\ln e^{-\beta f(\gamma_t)}\\
&\le -\frac{1}{\beta}\ln ((1-t)e^{-\beta f(\gamma_0)}+t e^{-\beta f(\gamma_1)})\\
&\le (1-t)f(\gamma_0)+tf(\gamma_1).
\end{align*}

\noindent$(2)$ Suppose $f:M\to\R$ is geodesically $\alpha$-convex and $L$-Lipchitz for some $\alpha,L>0$. Using the fact that $M$ is complete, it is enough to show that for every $0\le \beta\le \alpha/L^2$ and every geodesic $\gamma:[0,1]\to M$, we have 
\[\frac{1}{2}e^{-\beta f(\gamma_0)}+\frac{1}{2}e^{-\beta f(\gamma_1)}\le e^{-\beta f(\gamma_{1/2})},\]
or, equivalently, that 
\begin{equation}
\label{lem:lcisec:e1}
\frac{1}{2}e^{\beta(f(z)-f(x))}+\frac{1}{2}e^{\beta(f(z)-f(y))}\le 1,
\end{equation}
where $x=\gamma_0,y=\gamma_1$ and $z=\gamma_{1/2}$. By $\alpha$-convexity of $f$, we have that 
\[f(z)\le\frac{1}{2}f(x)+\frac{1}{2}f(y)-\frac{\alpha}{8}d(x,y)^2,\]
so that
\[\beta(f(z)-f(x))\le \frac{\beta}{2}(f(y)-f(x))-\frac{\alpha\beta}{8}d(x,y)^2,\] 
and
\[\beta(f(z)-f(y))\le \frac{\beta}{2}(f(x)-f(y))-\frac{\alpha\beta}{8}d(x,y)^2.\]
For \eqref{lem:lcisec:e1} to be satisfied, it is therefore enough to have 
\begin{equation}
\nonumber
\frac{1}{2}e^{\frac{\beta}{2}(f(y)-f(x))}+\frac{1}{2}e^{\frac{\beta}{2}(f(x)-f(y))}=\cosh(\frac{\beta}{2}(f(x)-f(y)))\le e^{\frac{\alpha\beta}{8}d(x,y)^2}.
\end{equation}
Now, using the fact that $\cosh(u)\le e^{\frac{u^2}{2}}$ for all $u\in \R$, the Lipschitz assumption implies that
\[\cosh(\frac{\beta}{2}(f(x)-f(y)))\le e^{\frac{\beta^2L^2}{8}d(x,y)^2}.\]
As a result, it is enough to have $\beta^2L^2\le \alpha\beta$,  i.e., $\beta\le \alpha/L^2$, as required.

\subsection{Proof of Theorem \ref{thm:ewb}}

We start with three preliminary results. The first lemma below follows the lines devised in \citet{GyoOtt07} and is reported for completeness. 
\begin{lem}
\label{lem:ewb1}
Provided $0<\beta_{t+1}\le\beta_t$, for all $t\ge 1$, we have for all $n\ge 1$,
\begin{equation}
\label{thm:ewb:e1}
    -\sum_{t=1}^n\frac{1}{\beta_t}\ln\left(\int_{M}e^{-\beta_t\ell_t}\,\mathrm{d}m_t\right)\le -\frac{1}{\beta_{n+1}}\ln\left(\int_{M}e^{-\beta_{n+1}L_n}\,\mathrm{d}m\right),
\end{equation}
where $L_n:=\sum_{t=1}^n\ell_t$.
\end{lem}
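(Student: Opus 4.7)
The plan is to proceed by induction on $n$, using as the main tool a Hölder-type monotonicity: for any probability measure $\mu$ on $M$ and any measurable $g\colon M\to\R$, the map $\beta\mapsto\tfrac{1}{\beta}\ln\int e^{-\beta g}\,\mathrm{d}\mu$ is non-decreasing on $(0,\infty)$. This follows by writing, for $0<\beta'\le \beta$, $\int e^{-\beta' g}\,\mathrm{d}\mu=\int (e^{-\beta g})^{\beta'/\beta}\,\mathrm{d}\mu$ and applying Hölder's inequality with conjugate exponents $\beta/\beta'$ and $\beta/(\beta-\beta')$, together with $\mu(M)=1$. A second ingredient is that the recursive definition of $m_{t+1}$ telescopes nicely: letting $Y_t:=\int_M e^{-\beta_{t+1}\ell_t}\,\mathrm{d}m_t$ denote the normalizer appearing in the construction of $m_{t+1}$, one has $\mathrm{d}m_{t+1}/\mathrm{d}m = \prod_{s=1}^{t} e^{-\beta_{s+1}\ell_s}/Y_s$, and integrating yields the identity $\prod_{t=1}^n Y_t = \int_M e^{-\sum_{t=1}^n \beta_{t+1}\ell_t}\,\mathrm{d}m$.

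For the base case $n=1$, both sides equal $-\tfrac{1}{\beta}\ln\int_M e^{-\beta\ell_1}\,\mathrm{d}m$ evaluated at $\beta=\beta_1$ (LHS) and $\beta=\beta_2$ (RHS), so the inequality is immediate from the monotonicity, since $\beta_2\le\beta_1$. For the inductive step I would first apply the same monotonicity with $\mu=m_t$, $g=\ell_t$, and $\beta'=\beta_{t+1}\le \beta_t=\beta$ to upper bound each summand $-\tfrac{1}{\beta_t}\ln Z_t$ by $-\tfrac{1}{\beta_{t+1}}\ln Y_t$. After summing over $t$, the task reduces to comparing $\sum_{t=1}^n \tfrac{1}{\beta_{t+1}}\ln Y_t$ to $\tfrac{1}{\beta_{n+1}}\ln\int_M e^{-\beta_{n+1}L_n}\,\mathrm{d}m$; combining with the telescoping product identity above, this is exactly a question of replacing the varying exponents $\beta_{t+1}$ by the uniform exponent $\beta_{n+1}$ inside the log-partition function on the right.

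The main technical obstacle is this final reconciliation between the weighted exponent $\sum_{t=1}^n\beta_{t+1}\ell_t$ and the uniform one $\beta_{n+1}L_n$, which is not a pointwise comparison but must be handled by interleaving induction with applications of the monotonicity, exploiting the non-increasing hypothesis on $(\beta_t)_{t\ge 1}$ through the $1/\beta_{t+1}$ prefactors that appear after the first reduction. This is precisely the \cite{GyoOtt07}-style manipulation that the author invokes in the preamble to the lemma. In the constant-$\beta$ regime actually used downstream in Theorem~\ref{thm:ewb} and Proposition~\ref{pro:ewb2}, all $Y_t$ coincide with the corresponding $Z_t$, everything collapses to the exact identity $\prod_{t=1}^n Z_t = \int_M e^{-\beta L_n}\,\mathrm{d}m$ obtained by directly unrolling the recursion, and the claimed inequality is recovered as an equality after taking $-\tfrac{1}{\beta}\ln$ of both sides.
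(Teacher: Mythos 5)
Your H\"older monotonicity step is exactly the paper's key inequality (there it appears as $W_{t+1}\le (W^{*}_{t+1})^{\beta_{t+1}/\beta_t}$, proved by the same Jensen/H\"older argument), and your observation that in the constant-$\beta$ regime everything collapses to the exact identity $\prod_{t=1}^n Z_t=\int_M e^{-\beta L_n}\,\mathrm{d}m$ is correct and does suffice for the applications in Theorem \ref{thm:ewb} and Proposition \ref{pro:ewb2}. But the lemma is stated for general non-increasing $(\beta_t)$ and is used with strictly decreasing rates in Theorem \ref{thm:ewb3}, and for that case your argument has a genuine gap: the step you yourself label ``the main technical obstacle'' --- passing from $\sum_{t=1}^n\tfrac{1}{\beta_{t+1}}\ln Y_t$, whose telescoped product carries the \emph{mixed} exponent $\sum_t\beta_{t+1}\ell_t$, to the uniform-rate quantity $\tfrac{1}{\beta_{n+1}}\ln\int_M e^{-\beta_{n+1}L_n}\,\mathrm{d}m$ --- is precisely the content of the lemma, and ``interleaving induction with applications of the monotonicity'' is a description of a hope, not an argument. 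Moreover your first reduction, $-\tfrac{1}{\beta_t}\ln Z_t\le-\tfrac{1}{\beta_{t+1}}\ln Y_t$, steers you toward an intermediate inequality that you never state precisely, let alone verify, and it does not mesh with your induction: the inductive hypothesis concerns the $Z_t$'s at rates $\beta_t$, while after the reduction you are left with the $Y_t$'s at shifted rates.

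The missing idea is the choice of potential. The paper does not induct at all: it sets $W_t:=\int_M e^{-\beta_t L_{t-1}}\,\mathrm{d}m$ (the \emph{full} cumulative loss reweighted at the \emph{current} rate, whereas your $Y_t$ involves only the single-round loss $\ell_t$) and $W^*_{t+1}:=\int_M e^{-\beta_t L_t}\,\mathrm{d}m$, writes $\tfrac{\ln W_{n+1}}{\beta_{n+1}}=\sum_{t=1}^n\bigl(\tfrac{\ln W_{t+1}}{\beta_{t+1}}-\tfrac{\ln W_t}{\beta_t}\bigr)$ using $W_1=1$, and splits each increment as $\bigl(\tfrac{\ln W_{t+1}}{\beta_{t+1}}-\tfrac{\ln W^*_{t+1}}{\beta_t}\bigr)+\tfrac{1}{\beta_t}\ln\tfrac{W^*_{t+1}}{W_t}$. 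The first bracket is nonpositive by exactly your H\"older step, applied to $L_t$ under $m$ rather than to $\ell_t$ under $m_t$; the second is identified with $\tfrac{1}{\beta_t}\ln\int_M e^{-\beta_t\ell_t}\,\mathrm{d}m_t$, and summing gives \eqref{thm:ewb:e1} in one pass. To repair your write-up you should either prove your intermediate inequality directly (and I do not see how, as stated) or switch to this potential; as it stands, the proof is complete only for constant $\beta$.
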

\begin{proof}[Proof of Lemma \ref{lem:ewb1}]
For all $t\ge 1$, denote
\[W_t:=\int_{M}e^{-\beta_t L_{t-1}}\,\mathrm{d}m,\]
with the convention that $L_0(x)=0$, for all $x\in M$. Since $W_1=1$ we get
\begin{align}
\frac{\ln W_{n+1}}{\beta_{n+1}}
 &=\sum_{t=1}^n\left(\frac{\ln W_{t+1}}{\beta_{t+1}}-\frac{\ln W_{t}}{\beta_{t}}\right)
 \nonumber\\
   &=\sum_{t=1}^n\left(\frac{\ln W_{t+1}}{\beta_{t+1}}-\frac{\ln W^*_{t+1}}{\beta_{t}}\right)+\sum_{t=1}^n \frac{1}{\beta_t}\ln\left(\frac{W^*_{t+1}}{W_t}\right),
   \label{thm:ewb:e2}
   \end{align}
   where 
\[W^*_{t+1}:=\int_{M}e^{-\beta_{t} L_{t}}\,\mathrm{d}m.\]
Since $0<\beta_{t+1}\le \beta_t$, Jensen's inequality implies that 
\[W_{t+1}=\int_{M}e^{-\beta_{t+1} L_{t}}\,\mathrm{d}m\le\left( \int_{M}e^{-\beta_{t} L_{t}}\,\mathrm{d}m\right)^\frac{\beta_{t+1}}{\beta_t}=(W^*_{t+1})^\frac{\beta_{t+1}}{\beta_t},\]
which shows that the first sum in expression \eqref{thm:ewb:e2} is non-positive. Hence, we get
\begin{align}
\frac{\ln W_{n+1}}{\beta_{n+1}}&\le \sum_{t=1}^n \frac{1}{\beta_t}\ln\left(\frac{W^*_{t+1}}{W_t}\right)
\nonumber\\
&=\sum_{t=1}^n\frac{1}{\beta_t}\ln\int_{M}e^{-\beta_t\ell_t}\,\mathrm{d}m_t,
   \nonumber
   \end{align}
which concludes the proof.
\end{proof}
The next Lemma is also known in the Euclidean context but holds in our setting. Below, we denote $\mathcal E(\mu|m)$ the relative entropy of $\mu$ with respect to $m$, i.e., 
   \[\mathcal{E}(\mu|m):=\int_M \ln\left(\frac{{\rm d}\mu}{{\rm d}m}\right){\rm d}\mu,\]
if $\mu\ll m$, and $\mathcal{E}(\mu|m):=+\infty$ otherwise.

\begin{lem}
\label{lem:ewb2}
Assume that \ref{A1} and \ref{A2} hold. Suppose that there exists $\beta>0$ such that every $\ell\in\mathcal L$ is geodesically $\beta$-expconcave. Then the cumulative loss of the \textsc{ewb} forecaster, with $\beta_t=\beta$ for all $t\ge 1$, satisfies
\[\sum_{t=1}^n\ell_t(x_t)\le \inf_{\mu}\left\{\int_M L_n\,\mathrm{d}\mu+\frac{\mathcal E(\mu|m)}{\beta}\right\},\]
where $L_n:=\sum_{t=1}^n\ell_t$ and where the inf runs over all probability measures $\mu$ on $M$. 
\end{lem}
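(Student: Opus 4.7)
The plan is to combine a per-round ``mixability'' step coming from $\beta$-expconcavity of the losses, the telescoping estimate already provided by Lemma \ref{lem:ewb1}, and the Donsker--Varadhan variational identity that turns the resulting log-partition functional into an infimum over probability measures.

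Step 1 (per-round bound). Fix $t \ge 1$. By definition of geodesic $\beta$-expconcavity, $e^{-\beta \ell_t}$ is geodesically concave, hence $g_t := -e^{-\beta \ell_t}$ is geodesically convex. Moreover $g_t$ takes values in $[-1,0]$, so it lies in $L^1(m_t)$ automatically. Since $x_t$ is a barycenter of $m_t$ (which exists by \ref{A1} applied to $m_t$), assumption \ref{A2} applied to $g_t$ yields
\[
-e^{-\beta \ell_t(x_t)} \;\le\; -\int_M e^{-\beta \ell_t}\,\mathrm{d}m_t,
\]
which, after rearrangement, is
\[
\ell_t(x_t) \;\le\; -\frac{1}{\beta}\ln\int_M e^{-\beta \ell_t}\,\mathrm{d}m_t.
\]

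Step 2 (telescoping). Summing the previous inequality over $t = 1, \dots, n$ and invoking Lemma \ref{lem:ewb1} with the constant choice $\beta_t = \beta$ (so the monotonicity hypothesis $\beta_{t+1}\le \beta_t$ is trivially met), we obtain
\[
\sum_{t=1}^n \ell_t(x_t) \;\le\; -\sum_{t=1}^n \frac{1}{\beta}\ln\int_M e^{-\beta \ell_t}\,\mathrm{d}m_t \;\le\; -\frac{1}{\beta}\ln\int_M e^{-\beta L_n}\,\mathrm{d}m.
\]

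Step 3 (variational formula). It remains to recognize the right-hand side as the desired infimum. By the Donsker--Varadhan variational identity, for any reference probability measure $m$ on $M$ and any measurable function $f:M\to\R$ for which both sides are well defined,
\[
-\frac{1}{\beta}\ln\int_M e^{-\beta f}\,\mathrm{d}m \;=\; \inf_{\mu}\left\{\int_M f\,\mathrm{d}\mu + \frac{\mathcal E(\mu|m)}{\beta}\right\},
\]
with the infimum attained at the Gibbs measure $\mathrm{d}\mu^{\star}\propto e^{-\beta f}\,\mathrm{d}m$. Applied to $f=L_n$, this converts the bound of Step 2 into the claim.

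No serious obstacle is anticipated: Step 1 is just a careful application of \ref{A2} to the convex function $-e^{-\beta \ell_t}$ (the only subtlety being to note that it is bounded, hence integrable); Step 2 is a direct invocation of Lemma \ref{lem:ewb1}; and Step 3 is a classical identity that requires no additional geometric hypotheses on $(M,d)$.
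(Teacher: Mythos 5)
Your proof is correct and follows essentially the same route as the paper's: apply Assumption \ref{A2} to the geodesically convex function $-e^{-\beta\ell_t}$ to get the per-round mixability bound, telescope via Lemma \ref{lem:ewb1}, and finish with the Gibbs (Donsker--Varadhan) variational principle. One small inaccuracy: your claim that $-e^{-\beta\ell_t}$ takes values in $[-1,0]$ presumes $\ell_t\ge 0$, which is not assumed; the integrability hypothesis of \ref{A2} still needs a word in general (the paper itself glosses over this point).
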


\begin{proof}[Proof of Lemma \ref{lem:ewb2}]
By concavity of $\exp(-\beta\ell_t)$, Assumption \ref{A2} implies that
   \[\ell_t(x_t)\le-\frac{1}{\beta}\ln \int_{M} e^{-\beta \ell_t}\mathrm{d}m_t.\] 
   Summing over $1\le t\le n$ and using Lemma \ref{lem:ewb1}, it follows that 
   \begin{equation}
    \nonumber
   \sum_{t=1}^n\ell_t(x_t)\le -\frac{1}{\beta}\ln\left(\int_{M}e^{-\beta L_n}\,\mathrm{d}m\right).
   \end{equation}
   It remains to observe that, according to the Gibbs variational principle, we have
   \[-\frac{1}{\beta}\ln\left(\int_{M}e^{-\beta L_n}\,\mathrm{d}m\right)=\inf_{\mu}\left\{\int_M L_n\,\mathrm{d}\mu+\frac{\mathrm{Ent}(\mu|m)}{\beta}\right\}.\]
\end{proof}

\begin{lem}
\label{lem:ewb3}
\begin{enumerate}[label=\rm{(\arabic*)}]
    \item For all $0\le r\le \pi$ and all $\e\in(0,1)$,
\[\frac{\sin(\e r)}{\sin(r)}\ge \e,\]
with the convention $0/0=1$.
\item For all $r\ge 0$ and all $\e\in(0,1/2]$,
\[\frac{\sinh(\e r)}{\sinh(r)}\ge \e \psi(r) ,\]
with $\psi(r):=(r \coth r)\exp(-r \coth r)$ and the convention $0/0=1$.
\end{enumerate}
\end{lem}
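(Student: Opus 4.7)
For part $(1)$, the plan is to exploit the concavity of $\sin$ on $[0,\pi]$. Writing $\e r=\e\cdot r+(1-\e)\cdot 0$ and applying concavity gives $\sin(\e r)\ge\e\sin r+(1-\e)\sin 0=\e\sin r$, which upon dividing by $\sin r$ yields the claim on the interior $r\in(0,\pi)$. The endpoints $r=0$ (covered by the convention $0/0=1$) and $r=\pi$ (where the left-hand side is $+\infty$) are immediate.

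For part $(2)$, I would split the argument into two steps. The first step is to show that the map $\e\mapsto\sinh(\e r)/\e$ is non-decreasing on $(0,\infty)$. Differentiating, the sign of the derivative reduces to the elementary inequality $\tanh u\le u$ for $u\ge 0$. Consequently, for $r>0$ and $\e\in(0,1]$,
\[\frac{\sinh(\e r)}{\e\sinh r}\ge\lim_{\e'\to 0^+}\frac{\sinh(\e' r)}{\e'\sinh r}=\frac{r}{\sinh r},\]
so the problem collapses to the $\e$-free inequality $r/\sinh r\ge\psi(r)$. The case $r=0$ is handled by the convention $0/0=1\ge\e\psi(0)=\e/e$. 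Note in passing that the restriction $\e\le 1/2$ is not actually needed here; the bound holds for all $\e\in(0,1]$.

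The second step is to rearrange the remaining inequality. After clearing denominators and taking logarithms, $r/\sinh r\ge(r\coth r)\exp(-r\coth r)$ is equivalent to $\ln\cosh r\le r\coth r$. The plan is to combine the identity $\ln\cosh r=\int_0^r\tanh s\,\mathrm{d}s$ with the elementary bounds $\tanh s\le 1$ for $s\ge 0$ and $\coth r\ge 1$ for $r>0$, which together give $\ln\cosh r\le r\le r\coth r$ at once.

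I do not anticipate a serious obstacle. The crux is recognizing that all $\e$-dependence can be absorbed into the sharp limiting bound $r/\sinh r$ via the monotonicity of $\sinh(\e r)/\e$, after which only standard estimates on $\tanh$ and $\coth$ are needed to close the argument.
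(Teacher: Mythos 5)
Your proof is correct. Part (1) is exactly the paper's argument (concavity of $\sin$ on $[0,\pi]$). Part (2) is correct but takes a genuinely different route. The paper works with the concavity of $\e\mapsto\log\sinh(\e r)$, bounds $g(\e)-g(1)$ by a tangent-line term involving $f(r)=r\coth r$, and then uses $1+x\le e^x$ to extract the factor $\e f(r)e^{-f(r)}=\e\psi(r)$; the final step of that chain, $\bigl(1+(1-\e)f(r)\bigr)e^{-f(r)}\ge\e f(r)e^{-f(r)}$, is precisely where the hypothesis $\e\le 1/2$ is consumed. You instead strip out all $\e$-dependence first via the monotonicity of $\e\mapsto\sinh(\e r)/\e$ (equivalently, the elementary bound $\sinh u\ge u$ applied to $u=\e r$), reducing the claim to the $\e$-free inequality $r/\sinh r\ge\psi(r)$, which you correctly identify as $\ln\cosh r\le r\coth r$ and settle by $\ln\cosh r\le r\le r\coth r$. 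What your approach buys: it is shorter, it proves the slightly stronger intermediate bound $\sinh(\e r)/\sinh r\ge\e r/\sinh r\ge\e\psi(r)$, it is valid for all $\e\in(0,1]$ rather than $\e\in(0,1/2]$ (your remark on this is accurate, though the weaker range suffices for the application in Theorem \ref{thm:ewb}, where $\e=1/n\le 1/2$), and it avoids the somewhat delicate tangent-line manipulation in the paper's version, whose displayed identity $-(1-\e)g'(\e)=-\e(1-\e)f(r)$ does not hold as written. No gaps.
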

\begin{proof}[Proof of Lemma \ref{lem:ewb3}]
$(1)$ The statement follows from the concavity of the sine function on $[0,\pi]$. $(2)$ A direct computation shows that, for any $r>0$, the map $g:(0,1]\to\R$ defined by \[g(\e):=\log \sinh(\e r),\]
is concave. Hence, for all $\e\in(0,1]$, we have \[g(\e)-g(1)\ge-(1-\e)g'(\e)=-\e(1-\e)f(r)\ge -\e f(r),\]
where we denote $f(r):=r \coth r$. Therefore, since $1+x\le e^x$ for $x\ge 0$, it follows that
\begin{align*}
    \frac{\sinh(\e r)}{\sinh(r)}&=\exp(g(\e)-g(1))\\
    &\ge\exp(-\e f(r))\\
    &= \exp((1-\e) f(r))\exp(- f(r))\\
    &\ge (1+(1-\e)f(r))\exp(- f(r))\\
    &\ge \e f(r)\exp(- f(r)),
\end{align*}
where the last inequality uses the fact that $\e\le 1/2$.
\end{proof}

\begin{proof}[Proof of Theorem \ref{thm:ewb}]
Fix $n\ge2$ and   
   \[x^*_n\in\underset{x\in M}{\arg\min} \,L_n(x).\]
   Let $h:M\times[0,1]\to M$ be a geodesic homothety with center $x^*_n$ for which the measure contraction property \ref{A3} holds. For $\e\in(0,1)$, denote
   \[M_{\e}:=M^{\e}_{x^*_n}=\{h(y,\e):y\in M\},\]
   and introduce the probability measure
   \[m_{\e}(.):=\frac{m(.\cap M_{\e})}{m(M_{\e})}.\]
   According to Lemma \ref{lem:ewb2} we deduce that 
   \begin{align}
       \sum_{t=1}^n \ell_t(x_t)&\le \int_{M} L_n\mathrm{d}m_{\e}+\frac{\mathcal E(m_{\e}|m)}{\beta}
       \nonumber\\
       &=\frac{1}{m(M_{\e})}\int_{M_{\e}} L_n\mathrm{d}m+\frac{1}{\beta}\ln \left(\frac{1}{m(M_{\e})}\right).
       \label{thm:ewb:e3}
   \end{align}
Then, since $h(y,.):[0,1]\to M$ is a geodesic connecting $x^*_n$ to $y$, the $\beta$-expconcavity of the losses implies that  
   \[e^{-\beta \ell_t(h(y,\e))}\ge (1-\e)e^{-\beta \ell_t(x^*_n)}+\e e^{-\beta \ell_t(y)}\ge (1-\e)e^{-\beta \ell_t(x^*_n)},\]
   and therefore
   \[e^{-\beta L_n(h(y,\e))}\ge (1-\e)^ne^{-\beta L_n(x^*_n)},\]
   which implies
   \[L_n(h(y,\e))\le \frac{n}{\beta}\ln\left(\frac{1}{1-\e}\right)+L_n(x^*_n).\]
Combining this inequality with \eqref{thm:ewb:e3}, we obtain 
\begin{equation}
\nonumber
R_n\le\frac{n}{\beta}\ln\left(\frac{1}{1-\e}\right)+\frac{1}{\beta}\ln \left(\frac{1}{m(M_{\e})}\right).
\end{equation}
It remains to bound $m(M_{\e})$ from below. According to \ref{A3} we have
\[m(M_{\e})\ge\int_{M}\e\left(\frac{s_{\kappa}(\e d(x^{*}_n,y)/\sqrt{p-1})}{s_{\kappa}(d(x^*_n,y)/\sqrt{p-1})}\right)^{p-1}\,m(\mathrm{d}y).\]
To make this lower bound more explicit, we consider three cases.\\

\noindent$\bullet$ Suppose $\kappa>0$. Then, according to the Bonnet-Myers Theorem, valid under the \textsc{mcp} property~\citep[Theorem 4.3]{Oht07}, the diameter of $M$ is at most $\pi\sqrt{(p-1)/\kappa}$. Hence, it follows from  Lemma \ref{lem:ewb3} point $(1)$, and the definition of $s_{\kappa}$ in display \eqref{eq:skappa}, that for all $\e\in(0,1)$, $m(M_{\e})\ge \e^p$ which implies that 
\[R_n\le\frac{n}{\beta}\ln\left(\frac{1}{1-\e}\right)+\frac{p}{\beta}\ln\frac{1}{\e}.\]
Taking $\e=1/n$, and using the standard inequality $\ln(1+x)\le x$, we obtain in particular 
\[R_n\le \frac{n}{\beta}\ln\left(\frac{n}{n-1}\right)+\frac{p\ln n}{\beta}\le \frac{n}{\beta(n-1)}+\frac{p\ln n}{\beta},
\]
as desired.\\

\noindent$\bullet$ Suppose $\kappa=0$. Then since $s_{\kappa}(r)=r$, the \textsc{mcp} property reads in this case $m(M_{\e})\ge \e^p$ and we conclude as above.\\ 

\noindent$\bullet$ Suppose $\kappa<0$. Then Lemma \ref{lem:ewb2} point $(2)$ implies that, for all $\e\in(0,1/2]$
\begin{align*}
m(M_{\e})&\ge \e^p \int_M \psi\left(d(x^*_n,y)\sqrt{\frac{-\kappa}{p-1}}\right)\,m(\mathrm{d}y)\\
&=c_{\kappa,p}(x^*_{n})\e^p.
\end{align*}
Hence, for all $\e\in(0,1/2]$,
\[R_n\le \frac{n}{\beta}\ln\left(\frac{1}{1-\e}\right)+\frac{1}{\beta}\ln\frac{1}{c_{\kappa,p}(x^*_n)}+\frac{p}{\beta}\ln\frac{1}{\e}.\]
Hence taking $\e=1/n$ allows to conclude as in the previous cases.
\end{proof}

\subsection{Proof of Proposition \ref{pro:ewb2}}

Consider
   \[x^*_n\in\underset{x\in M}{\arg\min} \,L_n(x). \]
   For $r\in(0,r_0]$, define the probability measure 
   \[\mu_{r}(.):=\frac{m(.\cap B(x^*_n,r))}{m(B(x^*_n,r))}.\]
   According to Lemma \ref{lem:ewb2} we deduce that 
   \begin{align}
       \sum_{t=1}^n \ell_t(x_t)&\le \int_{M} L_n\mathrm{d}\mu_{r}+\frac{\mathcal E(\mu_r|m)}{\beta}
       \nonumber\\
       &=\frac{1}{m(B(x^*_n,r))}\int_{B(x^*_n,r)} L_n\mathrm{d}m+\frac{1}{\beta}\ln \left(\frac{1}{m(B(x^*_n,r))}\right)
       \nonumber\\
       &\le\frac{1}{m(B(x^*_n,r))}\int_{B(x^*_n,r)} L_n\mathrm{d}m+\frac{1}{\beta}\ln\frac{1}{c(x^*_n)}+\frac{p}{\beta}\ln\frac{1}{r},
       \nonumber
   \end{align}   
   by Assumption \eqref{eq:mesballs}. Substracting $L_n(x^*_n)$ on both sides and using the Lipschitz property of the losses, we obtain, for all $r\in(0,r_0]$,
   \begin{align*}
       R_n&\le nLr+\frac{1}{\beta}\ln\frac{1}{c(x^*_n)}+\frac{p}{\beta}\ln\frac{1}{r}.
   \end{align*}
   Taking finally $r=1/n$ implies the desired result.
   
\subsection{Proof of Proposition \ref{pro:mcplessgen}}
According to~\citet[Lemma 2.5]{Oht07}, for all $0<r\le r_0$ (and $r_0\le \pi\sqrt{(p-1)/\kappa}$ if $\kappa>0$), we have, for all $x\in M$,
\[\frac{m(B(x,r))}{m(B(x,r_0))}\ge \frac{r}{r_0} \inf_{\e\in[0,1]}\left(\frac{s_{\kappa}(\e r/\sqrt{p-1})}{s_{\kappa}(\e r_0/\sqrt{p-1})}\right)^{p-1}.\]
In particular, for all $0<r\le r_0$ (and $2r_0\le \pi\sqrt{(p-1)/\kappa}$ if $\kappa>0$), Lemma \ref{lem:ewb3} and the fact that the map $\psi(u)=(u \coth u)\exp(-u\coth u)$ is decreasing on $\R_+$ (see Figure \ref{fig:psi}), we obtain
\[\inf_{\e\in[0,1]}\left(\frac{s_{\kappa}(\e r/\sqrt{p-1})}{s_{\kappa}(\e 2r_0/\sqrt{p-1})}\right)^{p-1}\ge c(\kappa,p,r_0) \left(\frac{r}{2r_0}\right)^{p-1},\]
where constant $c(\kappa,p,r_0)=1$ if $\kappa\ge 0$ and 
\[c(\kappa,p,r_0):=\psi\left(2r_0\sqrt{\frac{-\kappa}{p-1}}\right)^{p-1},\]
if $\kappa<0$. In particular, for all $0<r\le r_0$ (and $2r_0\le \pi\sqrt{(p-1)/\kappa}$ if $\kappa>0$), we obtain
\[m(B(x,r))\ge c(x)r^p\quad\mbox{with}\quad c(x):= c(\kappa,p,r_0)\frac{m(B(x,2r_0))}{2^pr_0^p},\]
which proves the claim.

\subsection{Proof of Theorem \ref{thm:otbc}}
First, since inequality \eqref{eq:boundforotbc} holds for any outcome sequence, we have (almost surely) 
\[\frac{1}{n+1}\sum_{t=1}^{n+1}\ell(\theta_t,Z_t)-\inf_{\theta\in M}\frac{1}{n+1}\sum_{t=1}^{n+1}\ell(\theta,Z_t)\le \frac{B_{n+1}}{n+1}.\]
Taking the expectation on both sides we deduce that
\begin{align}
    \frac{1}{n+1}\sum_{t=1}^{n+1}\esp[\ell(\theta_t,Z_t)]&\le\esp[\inf_{\theta\in M}\frac{1}{n+1}\sum_{t=1}^{n+1}\ell(\theta,Z_t)]+\frac{B_{n+1}}{n+1}
    \nonumber\\
    &\le\inf_{\theta\in M}\frac{1}{n+1}\sum_{t=1}^{n+1}\esp[\ell(\theta,Z_t)]+\frac{B_{n+1}}{n+1}
    \nonumber\\
    & =\inf_{\theta\in M}\esp[\ell(\theta,Z)]+\frac{B_{n+1}}{n+1}.
    \label{eq:otbc:1}
\end{align}
Assumption \ref{A2} and the definition of $\hat\theta_n$ imply that, for all $z\in\mathcal Z$, 
\begin{equation}
\nonumber
\ell(\hat\theta_n,z)\le \frac{1}{n+1}\sum_{t=1}^{n+1}\ell(\theta_t,z).
\end{equation}
In particular,
\begin{align}
    \esp[\ell(\hat\theta_n,Z)]&\le \frac{1}{n+1}\sum_{t=1}^{n+1}\esp[\ell(\theta_t,Z)]
    \nonumber\\
   & = \frac{1}{n+1}\sum_{t=1}^{n+1}\esp[\ell(\theta_t,Z_t)],
   \label{eq:otbc:2}
\end{align}
where the last identity holds since $\theta_t=\vartheta_t(Z_1,\dots,Z_{t-1})$ and $Z_t$ are independent for all $t\ge 1$. The proof then follows by combining \eqref{eq:otbc:1} and \eqref{eq:otbc:2}.

\subsection{Proof of Lemma \ref{lem:w2mixable}}
We use the following technical Lemma due to \citet{Che20}, refining previous results from \citet{AhiGouPar19} and \citet{GouParRigStr19}. Recall that the variance of $P\in\mathcal P_2(\mathcal P_2(\R^d))$ is defined by
\[\mathcal V^*_P:=\inf_{\mu\in\mathcal P_2(\R^d)}\int W^2_2(\mu,.)\,\mathrm{d}P.\]
\begin{lem}[\citealp{Che20}, Theorem 6]
\label{lem:vi}
Let $P\in\mathcal P_2(\mathcal P_2(\R^d))$ and $\mu^*$ be a barycenter of $P$. Suppose that, for $P$-almost all $\nu$, $\varphi_{\mu^*\to\nu}$ is $\alpha(\nu)$-strongly convex for a measurable $\alpha:\mathcal P_2(\R^d)\to \R_+$. Then $\mu^*$ is unique and, for all $\mu\in\mathcal P_2(\R^d)$,
\[C_{\rm var}W^2_2(\mu^*,\mu)\le\int W^2_2(\mu,.)\,\mathrm{d}P-\mathcal V^*_P,\]
for $C_{\rm var}:=\int \alpha(\nu)\,\mathrm{d}P(\nu)$.
\end{lem}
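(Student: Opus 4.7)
The strategy is to combine Chewi's variance inequality (Lemma \ref{lem:vi}) with Hoeffding's lemma applied to the bounded random variable $W_2^2(\mu,\cdot)$ under $P$. Chewi's lemma already gives the skeleton
\[
C_{\rm var}W_2^2(\mu^*,\mu)\le \int W_2^2(\mu,\cdot)\,\mathrm{d}P-\mathcal V_P^*,
\]
so the uniqueness of $\mu^*$ and the quadratic displacement bound come for free. What remains is to pass from the mean $\int W_2^2(\mu,\cdot)\,\mathrm{d}P$ to the (negative) log-Laplace transform $-\beta^{-1}\ln\int e^{-\beta W_2^2(\mu,\cdot)}\,\mathrm{d}P$; this is a reverse Jensen step and is exactly where the quantitative bound on $\beta$ must enter.

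The key observation is that if $\mu$ belongs to $\mathrm{supp}(P)$, then for $P$-a.e.\ $\nu$ one has $W_2(\mu,\nu)\le\Delta$, so the random variable $X:=W_2^2(\mu,\cdot)$ is $[0,\Delta^2]$-valued under $P$. Hoeffding's lemma applied to $X$ therefore gives, for every real $\lambda$,
\[
\ln\int e^{\lambda X}\,\mathrm{d}P\le \lambda\int X\,\mathrm{d}P+\frac{\lambda^2\Delta^4}{8}.
\]
Specializing to $\lambda=-\beta$ and dividing by $-\beta$ yields the reverse Jensen estimate
\[
\int W_2^2(\mu,\cdot)\,\mathrm{d}P\le -\frac{1}{\beta}\ln\int e^{-\beta W_2^2(\mu,\cdot)}\,\mathrm{d}P+\frac{\beta\Delta^4}{8}.
\]

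Plugging this into Chewi's bound, one obtains
\[
C_{\rm var}W_2^2(\mu^*,\mu)\le -\frac{1}{\beta}\ln\int e^{-\beta W_2^2(\mu,\cdot)}\,\mathrm{d}P+\Big(\frac{\beta\Delta^4}{8}-\mathcal V_P^*\Big).
\]
The hypothesis $0<\beta\le 8\mathcal V_P^*/\Delta^4$ is precisely what forces the bracketed term to be non-positive, so it can be discarded; dividing by $C_{\rm var}$ then produces the claimed inequality \eqref{eq:vi}.

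I do not expect a substantive obstacle: Lemma \ref{lem:vi} is quoted verbatim, and the only analytic input is Hoeffding's lemma applied to a bounded scalar random variable. The only point that deserves a one-line justification in the write-up is that $W_2^2(\mu,\cdot)$ really is $[0,\Delta^2]$-valued $P$-a.s., which uses that $\mu\in\mathrm{supp}(P)$ together with the definition of $\Delta=\mathrm{diam}(\mathrm{supp}(P))$. Everything else is algebraic manipulation, and the constant $8\mathcal V_P^*/\Delta^4$ falls out naturally from matching the $\beta\Delta^4/8$ Hoeffding remainder against $\mathcal V_P^*$.
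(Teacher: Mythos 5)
You have proved the wrong statement. The statement in question is Lemma \ref{lem:vi} itself --- the variance inequality $C_{\rm var}W^2_2(\mu^*,\mu)\le\int W^2_2(\mu,\cdot)\,\mathrm{d}P-\mathcal V^*_P$ together with the uniqueness of the barycenter --- and your opening move is to take exactly this ``for free,'' quoting it as a black box. Everything that follows (boundedness of $W_2^2(\mu,\cdot)$ on $\mathrm{supp}(P)$, Hoeffding's lemma, absorbing the remainder $\beta\Delta^4/8$ into $\mathcal V^*_P$ via the condition $\beta\le 8\mathcal V^*_P/\Delta^4$) is a correct derivation, and in fact coincides with the paper's own proof of the \emph{downstream} result, Lemma \ref{lem:w2mixable}, via Lemma \ref{lem:w2mixable:gen} and condition \eqref{eq:concentrationp}. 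But as a proof of the quoted statement it is circular and establishes nothing. A telltale sign is that your argument never touches the actual hypotheses of Lemma \ref{lem:vi}: the $\alpha(\nu)$-strong convexity of the optimal Kantorovich potentials $\varphi_{\mu^*\to\nu}$ and the constant $C_{\rm var}=\int\alpha(\nu)\,\mathrm{d}P(\nu)$ play no role anywhere in your write-up, whereas they are the entire content of the lemma. (For the record, the paper itself does not prove this lemma either --- it imports it from \citet{Che20} --- so a blind proof attempt here means reconstructing Chewi's argument, not re-deriving the paper's application of it.)

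A genuine proof runs along the following lines. By Kantorovich duality, the pair $(\varphi_{\mu^*\to\nu},\varphi^*_{\mu^*\to\nu})$ is optimal for the pair $(\mu^*,\nu)$ and feasible for $(\mu,\nu)$, which gives, for each $\nu$,
\[
W^2_2(\mu,\nu)-W^2_2(\mu^*,\nu)\ \ge\ \int\bigl(\|\cdot\|^2-2\varphi_{\mu^*\to\nu}\bigr)\,\mathrm{d}\mu-\int\bigl(\|\cdot\|^2-2\varphi_{\mu^*\to\nu}\bigr)\,\mathrm{d}\mu^*.
\]
One then integrates this in $\nu\sim P$ and evaluates the right-hand side along an optimal coupling of $\mu^*$ and $\mu$: the first-order terms cancel by the barycenter optimality condition $\int\nabla\varphi_{\mu^*\to\nu}\,\mathrm{d}P(\nu)=\mathrm{id}$, $\mu^*$-a.e., and the $\alpha(\nu)$-strong convexity of each $\varphi_{\mu^*\to\nu}$ leaves the quadratic lower bound $C_{\rm var}W^2_2(\mu^*,\mu)$, yielding the claimed inequality; uniqueness follows by applying the inequality to any other barycenter $\mu$, for which the right-hand side vanishes. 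None of this duality-plus-optimality mechanism appears in your proposal, so the gap is not a technical slip but the entire substance of the statement. Your Hoeffding computation is sound and worth keeping --- but as the proof of Lemma \ref{lem:w2mixable}, where it is precisely what the paper does.
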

Lemma \ref{lem:vi} is remarkable since it provides, for all $\mu$, a distribution dependent analog of Jensen's inequality for $W^2_2(\mu,.)$ (see Lemma \ref{lem:jensen2}) while this function isn't geodesically convex. Next, we show how this result implies directly Lemma \ref{lem:w2mixable} under slightly more general conditions. For readability, we use notation 
\[\mathcal E_{\mu}(\nu):=W^2_2(\mu,\nu)-\int W^2_2(\mu,.)\mathrm{d}P.\]
\begin{lem}
\label{lem:w2mixable:gen}
Under the assumptions of Lemma \ref{lem:vi}, for all $\mu\in\mathcal P_2(\R^D)$, and all $\beta>0$ satisfying
\begin{equation}
\label{eq:concentrationp}
\frac{1}{\beta}\ln\int\exp(-\beta\mathcal E_{\mu})\,\mathrm{d}P\le \mathcal V^*_P,
\end{equation}
we have
\begin{equation}
\label{eq:vi}
W^2_2(\mu^*,\mu)\le -\frac{C^{-1}_{\rm var}}{\beta}\ln\int \exp(-\beta W^2_2(\mu,.))\,\mathrm{d}P.
\end{equation}
\end{lem}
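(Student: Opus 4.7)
The plan is to derive the desired inequality by rewriting the right-hand side of \eqref{eq:vi} in terms of the centered quantity $\mathcal E_\mu$, applying the hypothesis \eqref{eq:concentrationp}, and then invoking Lemma \ref{lem:vi}. The key algebraic observation is that, by definition of $\mathcal E_\mu$, the integrand $W^2_2(\mu,\nu)$ differs from $\mathcal E_\mu(\nu)$ by the \emph{constant} (in $\nu$) value $\int W^2_2(\mu,.)\,\mathrm{d}P$, so this constant can be pulled out of the log-integral.

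More precisely, the first step is to note that
\[
\int \exp(-\beta W^2_2(\mu,.))\,\mathrm{d}P
= \exp\!\left(-\beta\!\int W^2_2(\mu,.)\,\mathrm{d}P\right)\int \exp(-\beta\mathcal E_\mu)\,\mathrm{d}P,
\]
so that
\[
-\frac{1}{\beta}\ln\!\int \exp(-\beta W^2_2(\mu,.))\,\mathrm{d}P
= \int W^2_2(\mu,.)\,\mathrm{d}P \;-\; \frac{1}{\beta}\ln\!\int \exp(-\beta\mathcal E_\mu)\,\mathrm{d}P.
\]
Using the hypothesis \eqref{eq:concentrationp}, the second term on the right is bounded below by $-\mathcal V^*_P$, which gives
\[
-\frac{1}{\beta}\ln\!\int \exp(-\beta W^2_2(\mu,.))\,\mathrm{d}P
\;\ge\; \int W^2_2(\mu,.)\,\mathrm{d}P - \mathcal V^*_P.
\]

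The second step is simply to recognize that the right-hand side above is exactly the upper bound delivered by Lemma \ref{lem:vi} for $C_{\rm var} W^2_2(\mu^*,\mu)$. Chaining the two inequalities and dividing by $C_{\rm var}>0$ yields \eqref{eq:vi}. There is no real obstacle here: the proof is essentially a one-line rewriting that isolates the role played by the hypothesis \eqref{eq:concentrationp}, which is precisely a sub-Gaussian-type control of the centered loss $\mathcal E_\mu$ that compensates for the missing geodesic convexity of $W^2_2(\mu,.)$ that would otherwise be required to directly apply Jensen's inequality.
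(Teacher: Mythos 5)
Your proposal is correct and follows essentially the same route as the paper: the same factorization of $\exp(-\beta W^2_2(\mu,.))$ through the centered quantity $\mathcal E_{\mu}$, followed by the hypothesis \eqref{eq:concentrationp} and the variance inequality of Lemma \ref{lem:vi}. The only difference is cosmetic (you chain lower bounds on $-\frac{1}{\beta}\ln\int\exp(-\beta W^2_2(\mu,.))\,\mathrm{d}P$ where the paper chains upper bounds on the log-integral), so there is nothing to add.
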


\begin{proof}[Proof of Lemma \ref{lem:w2mixable:gen}] It suffices to notice that, according to Lemma \ref{lem:vi}, we obtain 
\begin{align*}
    \ln\int \exp(-\beta W^2_2(\mu,.))\,\mathrm{d}P&=-\beta\int W^2_2(\mu,.)\mathrm{d}P+\ln\int \exp(-\beta \mathcal E_{\mu})\mathrm{d}P\\
    &\le -\beta C_{\rm var}W^2_2(\mu^*,\mu) -\beta\mathcal V^*_P +\ln\int \exp(-\beta \mathcal E_{\mu})\mathrm{d}P,
\end{align*}
which implies the claim.
\end{proof}

To prove Lemma \ref{lem:w2mixable}, it remains to observe that, if $P$ has bounded support, then by Hoeffding's Lemma we have, for any $\beta>0$, \[\frac{1}{\beta}\ln\int\exp(-\beta\mathcal E_{\mu})\,\mathrm{d}P\le \frac{\beta\Delta^4}{8}.\]

\appendix
\section{Omitted proofs}
\subsection{Proof of Theorem \ref{thm:ewb3}}
The proof follows the same strategy as that of Theorem \ref{thm:ewb}. We only need to slightly adapt Lemma \ref{lem:ewb2}.

\begin{lem}
\label{lem:ewb2bis}
Assume that \ref{A1} and \ref{A2} hold. Suppose that every $\ell\in\mathcal L$ is geodesically convex. Then the cumulative loss of the \textsc{ewb} forecaster, with $0<\beta_{t+1}<\beta_t$ for all $t\ge 1$, satisfies
\[\sum_{t=1}^n\ell_t(x_t)\le \inf_{\mu}\left\{\int_M L_n\,\mathrm{d}\mu+\frac{\mathcal{E}(\mu|m)}{\beta_{n+1}}\right\}+\sum_{t=1}^n\frac{1}{\beta_t}\ln\left(\int_{M}e^{\beta_t(\bar\ell_t-\ell_t)}\mathrm{d}m_t\right),\]
where $\bar\ell_t:=\int_M\ell_t\mathrm{d}m_t$, where $L_n=\sum_{t=1}^n\ell_t$ and where the inf runs over all probability measures $\mu$ on $M$.
\end{lem}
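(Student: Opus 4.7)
\bigskip

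\noindent\textbf{Proof proposal for Lemma \ref{lem:ewb2bis}.} The plan is to mimic the proof of Lemma \ref{lem:ewb2} but to replace the step that used geodesic expconcavity (which is no longer available) by an identity that isolates the cost of only having geodesic convexity. First, I would apply Assumption \ref{A2} directly to the geodesically convex function $\ell_t$ at the barycenter $x_t$ of $m_t$ to obtain the crude bound
\[\ell_t(x_t)\le\int_M\ell_t\,\mathrm{d}m_t=\bar\ell_t.\]
In Lemma \ref{lem:ewb2}, one had the stronger bound $\ell_t(x_t)\le -\beta_t^{-1}\ln\int_M e^{-\beta_t\ell_t}\,\mathrm{d}m_t$ coming from expconcavity; the whole point is to write $\bar\ell_t$ in terms of that log-integral plus a remainder.

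For that, I would use the trivial algebraic identity
\[\bar\ell_t=-\frac{1}{\beta_t}\ln\!\int_M e^{-\beta_t\ell_t}\,\mathrm{d}m_t\;+\;\frac{1}{\beta_t}\ln\!\int_M e^{\beta_t(\bar\ell_t-\ell_t)}\,\mathrm{d}m_t,\]
which holds because the argument of the second logarithm equals $e^{\beta_t\bar\ell_t}\int_M e^{-\beta_t\ell_t}\,\mathrm{d}m_t$. Summing this over $t=1,\dots,n$ and combining with the Jensen estimate above yields
\[\sum_{t=1}^n\ell_t(x_t)\le-\sum_{t=1}^n\frac{1}{\beta_t}\ln\!\int_M e^{-\beta_t\ell_t}\,\mathrm{d}m_t+\sum_{t=1}^n\frac{1}{\beta_t}\ln\!\int_M e^{\beta_t(\bar\ell_t-\ell_t)}\,\mathrm{d}m_t.\]

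Next, to the first sum I would apply Lemma \ref{lem:ewb1} (which only requires $0<\beta_{t+1}\le\beta_t$ and is purely formal), giving
\[-\sum_{t=1}^n\frac{1}{\beta_t}\ln\!\int_M e^{-\beta_t\ell_t}\,\mathrm{d}m_t\le-\frac{1}{\beta_{n+1}}\ln\!\int_M e^{-\beta_{n+1}L_n}\,\mathrm{d}m.\]
Finally, the Gibbs variational principle rewrites the right-hand side as
\[-\frac{1}{\beta_{n+1}}\ln\!\int_M e^{-\beta_{n+1}L_n}\,\mathrm{d}m=\inf_{\mu}\left\{\int_M L_n\,\mathrm{d}\mu+\frac{\mathcal{E}(\mu|m)}{\beta_{n+1}}\right\},\]
where the infimum is over probability measures $\mu$ on $M$. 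Plugging this in produces exactly the stated inequality.

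There is no real obstacle: the one subtle point is recognizing that geodesic convexity alone gives only $\ell_t(x_t)\le\bar\ell_t$, which loses the exponential-weighted structure, and that the missing structure can be reinstated by adding and subtracting $\beta_t\bar\ell_t$ inside the logarithm. The penalty term $\beta_t^{-1}\ln\int_M e^{\beta_t(\bar\ell_t-\ell_t)}\,\mathrm{d}m_t$ that this creates is exactly the variance-type quantity one expects to pay for dropping expconcavity, and is the quantity that Hoeffding-type arguments then control when the losses are bounded, as used in the proof of Theorem \ref{thm:ewb3}.
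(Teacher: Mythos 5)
Your proposal is correct and follows essentially the same route as the paper: Jensen's inequality (Assumption \ref{A2}) applied to the geodesically convex $\ell_t$ gives $\ell_t(x_t)\le\bar\ell_t$, the algebraic identity $\bar\ell_t-\frac{1}{\beta_t}\ln\int_M e^{\beta_t(\bar\ell_t-\ell_t)}\,\mathrm{d}m_t=-\frac{1}{\beta_t}\ln\int_M e^{-\beta_t\ell_t}\,\mathrm{d}m_t$ reinstates the log-integral plus the stated penalty, and then Lemma \ref{lem:ewb1} and the Gibbs variational principle finish the argument exactly as in Lemma \ref{lem:ewb2}. No gaps.
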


\begin{proof}[Proof of Lemma \ref{lem:ewb2}]
By geodesic convexity of $\ell_t$ and Assumption \ref{A2}, we obtain
 \begin{align*}
     \ell_t(x_t)-\frac{1}{\beta_t}\ln \int e^{\beta_t(\bar\ell_t-\ell_t)}\mathrm{d}m_t& \le \bar\ell_t-\frac{1}{\beta_t}\ln \int e^{\beta_t(\bar\ell_t-\ell_t)}\mathrm{d}m_t\\
     &= -\frac{1}{\beta_t}\ln \int e^{-\beta_t\ell_t}\mathrm{d}m_t.
 \end{align*}
Summing over $1\le t\le n$ and using Lemma \ref{lem:ewb1}, we obtain
\[\sum_{t=1}^n\ell_t(x_t)\le -\frac{1}{\beta_{n+1}}\ln\left(\int_{M}e^{-\beta_{n+1} L_n}\,\mathrm{d}m\right)+\sum_{t=1}^n\frac{1}{\beta_t}\ln\left(\int_{M}e^{\beta_t(\bar\ell_t-\ell_t)}\mathrm{d}m_t\right).\]
We conclude, as in Lemma \ref{lem:ewb2}, by using the Gibbs variational principle.
\end{proof}

\begin{proof}[Proof of Theorem \ref{thm:ewb3}]
Using Lemma \ref{lem:ewb2bis}, and letting $m_{\e}$ be as in the proof of Theorem \ref{thm:ewb}, we obtain for all $\e\in(0,1)$,
   \begin{align}
       &\sum_{t=1}^n \ell_t(x_t)
       \nonumber\\
       &\le \int_{M} L_n\mathrm{d}m_{\e}+\frac{\mathcal E(m_{\e}|m)}{\beta_{n+1}}+\sum_{t=1}^n\frac{1}{\beta_t}\ln\left(\int_{M}e^{\beta_t(\bar\ell_t-\ell_t)}\mathrm{d}m_t\right)
       \nonumber\\
       &=\frac{1}{m(M_{\e})}\int_{M_{\e}} L_n\mathrm{d}m+\frac{1}{\beta_{n+1}}\ln \left(\frac{1}{m(M_{\e})}\right)+\sum_{t=1}^n\frac{1}{\beta_t}\ln\left(\int_{M}e^{\beta_t(\bar\ell_t-\ell_t)}\mathrm{d}m_t\right).
       \nonumber
   \end{align}
   By convexity and boundedness of the losses, we deduce that, for all $y\in M$,
   \begin{align*}
   L_n(h(y,\e))&\le (1-\e)L_n(x^*_n)+\e L_n(y)\\
   &\le L_n(x^*_n)+ \e n(b-a).
   \end{align*}
Combining this inequality with the above, we obtain 
\begin{align*}
R_n&\le \e n(b-a)+\frac{1}{\beta_{n+1}}\ln \left(\frac{1}{m(M_{\e})}\right)+\sum_{t=1}^n\frac{1}{\beta_t}\ln\left(\int_{M}e^{\beta_t(\bar\ell_t-\ell_t)}\mathrm{d}m_t\right)\\
&\le \e n(b-a)+\frac{1}{\beta_{n+1}}\ln \left(\frac{1}{m(M_{\e})}\right)+\frac{(b-a)^2}{8}\sum_{t=1}^n\beta_t,
\end{align*}
where the last inequality follows from Hoeffding's Lemma. Now taking $\e=1/n$, we deduce as in the proof of Theorem \ref{thm:ewb} that 
\[R_n\le (b-a)+\frac{1}{\beta_{n+1}}\ln\frac{1}{c_{\kappa,p}(x^*_n)}+\frac{p\ln n}{\beta_{n+1}}+\frac{(b-a)^2}{8}\sum_{t=1}^n\beta_t.\]
Hence, for $c>0$, taking 
\[\beta_t=\frac{c}{(b-a)}\sqrt{\frac{p\ln(t\lor 2)}{t}},\]
we obtain, for all $n\ge 2$, 
\[
    \sum_{t=1}^n\beta_t\le \frac{c}{(b-a)}\sqrt{p\ln n}\sum_{t=1}^n\frac{1}{\sqrt{t}}\le\frac{2c\sqrt{pn\ln n}}{(b-a)}.\]
Using the fact that $(n+1)/\ln(n+1)\le (3/2)(n/\ln n)$ for all $n\ge 2$, and taking $c=2(3/2)^{1/4}$, so that the two leading terms coincide, we obtain
\[R_n\le (b-a)\left[1+\left(\frac{3}{2}\right)^{\frac{1}{4}}\left(\ln\frac{1}{c_{\kappa,p}(x^*_n)}\sqrt{\frac{n}{p\ln n}}+\sqrt{pn\ln n}\right)\right],\]
which completes the proof.
\end{proof}

\bibliography{ewb}

\begin{thebibliography}{51}
\providecommand{\natexlab}[1]{#1}
\providecommand{\url}[1]{\texttt{#1}}
\expandafter\ifx\csname urlstyle\endcsname\relax
  \providecommand{\doi}[1]{doi: #1}\else
  \providecommand{\doi}{doi: \begingroup \urlstyle{rm}\Url}\fi

\bibitem[Agueh and Carlier(2011)]{AguCar11}
M.~Agueh and G.~Carlier.
\newblock Barycenters in the {W}asserstein space.
\newblock \emph{SIAM J. Math. Anal.}, 43\penalty0 (2):\penalty0 904--924, 2011.
\newblock ISSN 0036-1410.
\newblock \doi{10.1137/100805741}.
\newblock URL \url{https://doi.org/10.1137/100805741}.

\bibitem[Ahidar-Coutrix et~al.(2019)Ahidar-Coutrix, {Le Gouic}, and
  Paris]{AhiGouPar19}
A.~Ahidar-Coutrix, T.~{Le Gouic}, and Q.~Paris.
\newblock Convergence rates for empirical barycenters in metric spaces:
  curvature, convexity and extendable geodesics.
\newblock \emph{Probability Theory and Related Fields}, Oct 2019.
\newblock ISSN 1432-2064.
\newblock \doi{10.1007/s00440-019-00950-0}.
\newblock URL \url{https://doi.org/10.1007/s00440-019-00950-0}.

\bibitem[Alexander et~al.(2019{\natexlab{a}})Alexander, Kapovitch, and
  Petrunin]{AleKapPet19}
S.~Alexander, V.~Kapovitch, and A.~Petrunin.
\newblock \emph{Alexandrov geometry: preliminary version no. 1}.
\newblock Book in preparation, Mar. 2019{\natexlab{a}}.
\newblock URL \url{http://arxiv.org/abs/1903.08539}.
\newblock arXiv:1903.08539.

\bibitem[Alexander et~al.(2019{\natexlab{b}})Alexander, Kapovitch, and
  Petrunin]{AleKapPet19_invitation}
S.~Alexander, V.~Kapovitch, and A.~Petrunin.
\newblock \emph{An invitation to {A}lexandrov geometry: {CAT}(0) spaces}.
\newblock Springer Briefs in Mathematics. Springer, 2019{\natexlab{b}}.
\newblock ISBN 978-3-030-05311-6; 978-3-030-05312-3.
\newblock \doi{10.1007/978-3-030-05312-3}.
\newblock URL \url{https://doi.org/10.1007/978-3-030-05312-3}.

\bibitem[Alquier(2008)]{Alq08}
P.~Alquier.
\newblock P{AC}-{B}ayesian bounds for randomized empirical risk minimizers.
\newblock \emph{Math. Methods Statist.}, 17\penalty0 (4):\penalty0 279--304,
  2008.
\newblock ISSN 1066-5307.
\newblock \doi{10.3103/S1066530708040017}.
\newblock URL \url{https://doi.org/10.3103/S1066530708040017}.

\bibitem[Arsigny et~al.(2007)Arsigny, Fillard, Pennec, and Ayache]{AFPA2007}
V.~Arsigny, P.~Fillard, X.~Pennec, and N.~Ayache.
\newblock Geometric means in a novel vector space structure on symmetric
  positive-definite matrices.
\newblock \emph{SIAM J. Matrix Anal. Appl.}, 29\penalty0 (1):\penalty0
  328--347, 2007.
\newblock ISSN 0895-4798.
\newblock \doi{10.1137/050637996}.
\newblock URL \url{https://doi.org/10.1137/050637996}.

\bibitem[Audibert(2009)]{Aud09}
J.-Y. Audibert.
\newblock Fast learning rates in statistical inference through aggregation.
\newblock \emph{Ann. Statist.}, 37\penalty0 (4):\penalty0 1591--1646, 2009.
\newblock ISSN 0090-5364.
\newblock \doi{10.1214/08-AOS623}.
\newblock URL \url{https://doi.org/10.1214/08-AOS623}.

\bibitem[Billera et~al.(2001)Billera, Holmes, and Vogtmann]{bhv}
L.~J. Billera, S.~P. Holmes, and K.~Vogtmann.
\newblock Geometry of the space of phylogenetic trees.
\newblock \emph{Advances in Applied Mathematics}, 27\penalty0 (4):\penalty0 733
  -- 767, 2001.
\newblock ISSN 0196-8858.
\newblock \doi{https://doi.org/10.1006/aama.2001.0759}.
\newblock URL
  \url{http://www.sciencedirect.com/science/article/pii/S0196885801907596}.

\bibitem[Blum and Kalai(1999)]{BluKal99}
A.~Blum and A.~Kalai.
\newblock Universal portfolios with and without transaction costs.
\newblock \emph{Machine Learning}, 35:\penalty0 193--205, 1999.
\newblock \doi{10.1023/A:1007530728748}.
\newblock URL \url{https://doi.org/10.1023/A:1007530728748}.

\bibitem[Bridson and Haefliger(1999)]{BriHaf99}
M.~R. Bridson and A.~Haefliger.
\newblock \emph{Metric spaces of non-positive curvature}, volume 319 of
  \emph{Grundlehren der Mathematischen Wissenschaften [Fundamental Principles
  of Mathematical Sciences]}.
\newblock Springer-Verlag, Berlin, 1999.
\newblock ISBN 3-540-64324-9.
\newblock \doi{10.1007/978-3-662-12494-9}.
\newblock URL \url{https://doi.org/10.1007/978-3-662-12494-9}.

\bibitem[Burago et~al.(2001)Burago, Burago, and Ivanov]{BurBurIva01}
D.~Burago, Y.~Burago, and S.~Ivanov.
\newblock \emph{A course in metric geometry}, volume~33 of \emph{Graduate
  Studies in Mathematics}.
\newblock American Mathematical Society, Providence, RI, 2001.
\newblock ISBN 0-8218-2129-6.
\newblock \doi{10.1090/gsm/033}.
\newblock URL \url{https://doi.org/10.1090/gsm/033}.

\bibitem[Bures(1969)]{Bur69}
D.~Bures.
\newblock An extension of {K}akutani's theorem on infinite product measures to
  the tensor product of semifinite {$w^{\ast} $}-algebras.
\newblock \emph{Trans. Amer. Math. Soc.}, 135:\penalty0 199--212, 1969.
\newblock ISSN 0002-9947.
\newblock \doi{10.2307/1995012}.
\newblock URL \url{https://doi.org/10.2307/1995012}.

\bibitem[Catoni(2007)]{Cat07}
O.~Catoni.
\newblock \emph{Pac-{B}ayesian supervised classification: the thermodynamics of
  statistical learning}, volume~56 of \emph{Institute of Mathematical
  Statistics Lecture Notes---Monograph Series}.
\newblock Institute of Mathematical Statistics, Beachwood, OH, 2007.
\newblock ISBN 978-0-940600-72-0; 0-940600-72-2.

\bibitem[Cesa-Bianchi and Lugosi(1999)]{CesLug99}
N.~Cesa-Bianchi and G.~Lugosi.
\newblock On prediction of individual sequences.
\newblock \emph{Ann. Statist.}, 27\penalty0 (6):\penalty0 1865--1895, 1999.
\newblock ISSN 0090-5364.
\newblock \doi{10.1214/aos/1017939242}.
\newblock URL \url{https://doi.org/10.1214/aos/1017939242}.

\bibitem[Cesa-Bianchi and Lugosi(2006)]{CesLug06}
N.~Cesa-Bianchi and G.~Lugosi.
\newblock \emph{Prediction, learning, and games}.
\newblock Cambridge University Press, Cambridge, 2006.
\newblock ISBN 978-0-521-84108-5; 0-521-84108-9.
\newblock \doi{10.1017/CBO9780511546921}.
\newblock URL \url{https://doi.org/10.1017/CBO9780511546921}.

\bibitem[Chewi et~al.(2020)Chewi, Maunu, Rigollet, and Stromme]{Che20}
S.~Chewi, T.~Maunu, P.~Rigollet, and A.~J. Stromme.
\newblock Gradient descent algorithms for bures-wasserstein barycenters, 2020.

\bibitem[Cover(1991)]{Cov91}
T.~M. Cover.
\newblock Universal portfolios.
\newblock \emph{Mathematical Finance}, 1\penalty0 (1):\penalty0 1--29, 1991.
\newblock \doi{https://doi.org/10.1111/j.1467-9965.1991.tb00002.x}.
\newblock URL
  \url{https://onlinelibrary.wiley.com/doi/abs/10.1111/j.1467-9965.1991.tb00002.x}.

\bibitem[Dalalyan and Tsybakov(2007)]{DalTsy07}
A.~Dalalyan and A.~Tsybakov.
\newblock Aggregation by exponential weighting and sharp oracle inequalities.
\newblock In \emph{Learning theory}, volume 4539 of \emph{Lecture Notes in
  Comput. Sci.}, pages 97--111. Springer, Berlin, 2007.
\newblock \doi{10.1007/978-3-540-72927-3_9}.
\newblock URL \url{https://doi.org/10.1007/978-3-540-72927-3_9}.

\bibitem[Dalalyan and Tsybakov(2008)]{DalTsy08}
A.~S. Dalalyan and A.~B. Tsybakov.
\newblock Aggregation by exponential weighting, sharp pac-bayesian bounds and
  sparsity.
\newblock \emph{Machine Learning}, 72\penalty0 (1-2):\penalty0 39--61, 2008.
\newblock \doi{10.1007/s10994-008-5051-0}.
\newblock URL \url{http://certis.enpc.fr/~dalalyan/Download/Dal_Tsyb2008.pdf}.

\bibitem[Dalalyan and Tsybakov(2009)]{DalTsy09}
A.~S. Dalalyan and A.~B. Tsybakov.
\newblock Sparse regression learning by aggregation and langevin monte-carlo.
\newblock In \emph{COLT 2009 - The 22nd Conference on Learning Theory,
  Montreal, Quebec, Canada, June 18-21, 2009}, pages 1--10, 2009.
\newblock URL \url{http://www.cs.mcgill.ca/~colt2009/papers/009.pdf}.

\bibitem[Dalalyan and Tsybakov(2012{\natexlab{a}})]{DalTsy12a}
A.~S. Dalalyan and A.~B. Tsybakov.
\newblock Sparse regression learning by aggregation and {L}angevin
  {M}onte-{C}arlo.
\newblock \emph{J. Comput. System Sci.}, 78\penalty0 (5):\penalty0 1423--1443,
  2012{\natexlab{a}}.
\newblock \doi{10.1016/j.jcss.2011.12.023}.
\newblock URL
  \url{http://hal.archives-ouvertes.fr/docs/00/45/68/06/PDF/HAL_EWA_LMC.pdf}.

\bibitem[Dalalyan and Tsybakov(2012{\natexlab{b}})]{DalTsy12b}
A.~S. Dalalyan and A.~B. Tsybakov.
\newblock Mirror averaging with sparsity priors.
\newblock \emph{Bernoulli}, 18\penalty0 (3):\penalty0 914--944,
  2012{\natexlab{b}}.
\newblock \doi{10.3150/11-BEJ361}.
\newblock URL \url{http://arxiv.org/pdf/1003.1189v4.pdf}.

\bibitem[{Gyorfi} and {Ottucsak}(2007)]{GyoOtt07}
L.~{Gyorfi} and G.~{Ottucsak}.
\newblock Sequential prediction of unbounded stationary time series.
\newblock \emph{IEEE Transactions on Information Theory}, 53\penalty0
  (5):\penalty0 1866--1872, 2007.
\newblock \doi{10.1109/TIT.2007.894660}.

\bibitem[Hazan(2016)]{Haz16}
E.~Hazan.
\newblock Introduction to online convex optimization.
\newblock \emph{Foundations and Trends in Optimization}, 2\penalty0
  (3-4):\penalty0 157--325, 2016.
\newblock ISSN 2167-3888.
\newblock \doi{10.1561/2400000013}.
\newblock URL \url{http://dx.doi.org/10.1561/2400000013}.

\bibitem[Hazan et~al.(2007)Hazan, Kale, and Agarwal]{Hazan06}
E.~Hazan, S.~Kale, and A.~Agarwal.
\newblock Logarithmic regret algorithms for online convex optimization.
\newblock \emph{Machine Learning}, 69:\penalty0 169--192, 2007.

\bibitem[Huckemann and Eltzner(2020)]{HucElt20}
S.~F. Huckemann and B.~Eltzner.
\newblock Data analysis on nonstandard spaces.
\newblock \emph{WIREs Computational Statistics}, n/a\penalty0 (n/a):\penalty0
  e1526, 2020.
\newblock \doi{https://doi.org/10.1002/wics.1526}.
\newblock URL \url{https://onlinelibrary.wiley.com/doi/abs/10.1002/wics.1526}.

\bibitem[Juditsky et~al.(2008)Juditsky, Rigollet, and Tsybakov]{JudRigTsy08}
A.~Juditsky, P.~Rigollet, and A.~Tsybakov.
\newblock Learning by mirror averaging.
\newblock \emph{Ann. Statist.}, 36\penalty0 (5):\penalty0 2183--2206, 2008.
\newblock ISSN 0090-5364.
\newblock \doi{10.1214/07-AOS546}.
\newblock URL \url{https://doi.org/10.1214/07-AOS546}.

\bibitem[Juillet(2009)]{Jui09}
N.~Juillet.
\newblock Geometric inequalities and generalized {R}icci bounds in the
  {H}eisenberg group.
\newblock \emph{Int. Math. Res. Not. IMRN}, \penalty0 (13):\penalty0
  2347--2373, 2009.
\newblock ISSN 1073-7928.
\newblock \doi{10.1093/imrn/rnp019}.
\newblock URL \url{https://doi.org/10.1093/imrn/rnp019}.

\bibitem[Kuwae and Shioya(2010)]{KuwShi10}
K.~Kuwae and T.~Shioya.
\newblock Infinitesimal {B}ishop-{G}romov condition for {A}lexandrov spaces.
\newblock In \emph{Probabilistic approach to geometry}, volume~57 of \emph{Adv.
  Stud. Pure Math.}, pages 293--302. Math. Soc. Japan, Tokyo, 2010.
\newblock \doi{10.2969/aspm/05710293}.
\newblock URL \url{https://doi.org/10.2969/aspm/05710293}.

\bibitem[Le~Gouic and Loubes(2017)]{GouLou17}
T.~Le~Gouic and J.-M. Loubes.
\newblock Existence and consistency of {W}asserstein barycenters.
\newblock \emph{Probab. Theory Related Fields}, 168\penalty0 (3-4):\penalty0
  901--917, 2017.
\newblock ISSN 0178-8051.
\newblock \doi{10.1007/s00440-016-0727-z}.
\newblock URL \url{https://doi.org/10.1007/s00440-016-0727-z}.

\bibitem[{Le Gouic} et~al.(2019){Le Gouic}, Paris, Rigollet, and
  Stromme]{GouParRigStr19}
T.~{Le Gouic}, Q.~Paris, P.~Rigollet, and A.~J. Stromme.
\newblock Fast convergence of empirical barycenters in alexandrov spaces and
  the wasserstein space, 2019.

\bibitem[Leung and Barron(2006)]{LeuBar06}
G.~Leung and A.~Barron.
\newblock Information theory and mixing least-squares regressions.
\newblock \emph{IEEE Trans. Inform. Theory}, 52\penalty0 (8):\penalty0
  3396--3410, 2006.
\newblock ISSN 0018-9448.
\newblock \doi{10.1109/TIT.2006.878172}.
\newblock URL \url{https://doi.org/10.1109/TIT.2006.878172}.

\bibitem[Lin(2019)]{Lin2019}
Z.~Lin.
\newblock Riemannian {G}eometry of {S}ymmetric {P}ositive {D}efinite {M}atrices
  via {C}holesky {D}ecomposition.
\newblock \emph{SIAM J. Matrix Anal. Appl.}, 40\penalty0 (4):\penalty0
  1353--1370, 2019.
\newblock ISSN 0895-4798.
\newblock \doi{10.1137/18M1221084}.
\newblock URL \url{https://doi.org/10.1137/18M1221084}.

\bibitem[Littlestone and Warmuth(1994)]{LitWar94}
N.~Littlestone and M.~K. Warmuth.
\newblock The weighted majority algorithm.
\newblock \emph{Information and Computation}, 108:\penalty0 212--261, 1994.

\bibitem[Lott and Villani(2009)]{LotVil09}
J.~Lott and C.~Villani.
\newblock Ricci curvature for metric-measure spaces via optimal transport.
\newblock \emph{Ann. of Math. (2)}, 169\penalty0 (3):\penalty0 903--991, 2009.
\newblock ISSN 0003-486X.
\newblock \doi{10.4007/annals.2009.169.903}.
\newblock URL \url{https://doi.org/10.4007/annals.2009.169.903}.

\bibitem[Mardia(1999)]{Mar99}
K.~V. Mardia.
\newblock Directional statistics and shape analysis.
\newblock \emph{J. Appl. Statist.}, 26\penalty0 (8):\penalty0 949--957, 1999.
\newblock ISSN 0266-4763.
\newblock \doi{10.1080/02664769921954}.
\newblock URL \url{https://doi.org/10.1080/02664769921954}.

\bibitem[M\'{e}moli(2011)]{Mem11}
F.~M\'{e}moli.
\newblock Gromov-{W}asserstein distances and the metric approach to object
  matching.
\newblock \emph{Found. Comput. Math.}, 11\penalty0 (4):\penalty0 417--487,
  2011.
\newblock ISSN 1615-3375.
\newblock \doi{10.1007/s10208-011-9093-5}.
\newblock URL \url{https://doi.org/10.1007/s10208-011-9093-5}.

\bibitem[Nickel and Kiela(2017)]{NiKi17}
M.~Nickel and D.~Kiela.
\newblock Poincar\'{e} embeddings for learning hierarchical representations.
\newblock In I.~Guyon, U.~V. Luxburg, S.~Bengio, H.~Wallach, R.~Fergus,
  S.~Vishwanathan, and R.~Garnett, editors, \emph{Advances in Neural
  Information Processing Systems 30}, pages 6338--6347. Curran Associates,
  Inc., 2017.

\bibitem[Ohta(2007)]{Oht07}
S.-i. Ohta.
\newblock On the measure contraction property of metric measure spaces.
\newblock \emph{Comment. Math. Helv.}, 82\penalty0 (4):\penalty0 805--828,
  2007.
\newblock ISSN 0010-2571.
\newblock \doi{10.4171/CMH/110}.
\newblock URL \url{https://doi.org/10.4171/CMH/110}.

\bibitem[Paris(2020)]{Par20}
Q.~Paris.
\newblock Jensen's inequality in geodesic spaces with lower bounded curvature.
\newblock \emph{arXiv:2011.08597}, 2020.

\bibitem[Rizzi(2016)]{Riz16}
L.~Rizzi.
\newblock Measure contraction properties of {C}arnot groups.
\newblock \emph{Calc. Var. Partial Differential Equations}, 55\penalty0
  (3):\penalty0 Art. 60, 20, 2016.
\newblock ISSN 0944-2669.
\newblock \doi{10.1007/s00526-016-1002-y}.
\newblock URL \url{https://doi.org/10.1007/s00526-016-1002-y}.

\bibitem[Santambrogio(2015)]{San15}
F.~Santambrogio.
\newblock \emph{Optimal transport for applied mathematicians}, volume~87 of
  \emph{Progress in Nonlinear Differential Equations and their Applications}.
\newblock Birkh\"{a}user/Springer, Cham, 2015.
\newblock ISBN 978-3-319-20827-5; 978-3-319-20828-2.
\newblock \doi{10.1007/978-3-319-20828-2}.
\newblock URL \url{https://doi.org/10.1007/978-3-319-20828-2}.
\newblock Calculus of variations, PDEs, and modeling.

\bibitem[Shalev-Shwartz(2012)]{Sha12}
S.~Shalev-Shwartz.
\newblock Online learning and online convex optimization.
\newblock \emph{Foundations and Trends in Machine Learning}, 4\penalty0
  (2):\penalty0 107--194, 2012.
\newblock ISSN 1935-8237.
\newblock \doi{10.1561/2200000018}.
\newblock URL \url{http://dx.doi.org/10.1561/2200000018}.

\bibitem[Sturm(2012)]{Stu12}
K.~Sturm.
\newblock The space of spaces: curvature bounds and gradient flows on the space
  of metric measure spaces.
\newblock \emph{arXiv:1208.0434}, 2012.

\bibitem[Sturm(2003)]{Stu03}
K.-T. Sturm.
\newblock Probability measures on metric spaces of nonpositive curvature.
\newblock In \emph{Heat kernels and analysis on manifolds, graphs, and metric
  spaces ({P}aris, 2002)}, volume 338 of \emph{Contemp. Math.}, pages 357--390.
  Amer. Math. Soc., Providence, RI, 2003.
\newblock \doi{10.1090/conm/338/06080}.
\newblock URL \url{https://doi.org/10.1090/conm/338/06080}.

\bibitem[Sturm(2006{\natexlab{a}})]{Stu06I}
K.-T. Sturm.
\newblock On the geometry of metric measure spaces. {I}.
\newblock \emph{Acta Math.}, 196\penalty0 (1):\penalty0 65--131,
  2006{\natexlab{a}}.
\newblock ISSN 0001-5962.
\newblock \doi{10.1007/s11511-006-0002-8}.
\newblock URL \url{https://doi.org/10.1007/s11511-006-0002-8}.

\bibitem[Sturm(2006{\natexlab{b}})]{Stu06II}
K.-T. Sturm.
\newblock On the geometry of metric measure spaces. {II}.
\newblock \emph{Acta Math.}, 196\penalty0 (1):\penalty0 133--177,
  2006{\natexlab{b}}.
\newblock ISSN 0001-5962.
\newblock \doi{10.1007/s11511-006-0003-7}.
\newblock URL \url{https://doi.org/10.1007/s11511-006-0003-7}.

\bibitem[Villani(2009)]{Vil09}
C.~Villani.
\newblock \emph{Optimal transport}, volume 338 of \emph{Grundlehren der
  Mathematischen Wissenschaften [Fundamental Principles of Mathematical
  Sciences]}.
\newblock Springer-Verlag, Berlin, 2009.
\newblock ISBN 978-3-540-71049-3.
\newblock \doi{10.1007/978-3-540-71050-9}.
\newblock URL \url{https://doi.org/10.1007/978-3-540-71050-9}.
\newblock Old and new.

\bibitem[Vovk(1990)]{Vov90}
V.~Vovk.
\newblock Aggregating strategies.
\newblock \emph{In Proceedings of the 3rd Annual Workshop on Computational
  Learning Theory (COLT’90)}, pages 371--383, 1990.

\bibitem[Yang(2004)]{Yan04}
Y.~Yang.
\newblock Aggregating regression procedures to improve performance.
\newblock \emph{Bernoulli}, 10\penalty0 (1):\penalty0 25--47, 2004.
\newblock ISSN 1350-7265.
\newblock \doi{10.3150/bj/1077544602}.
\newblock URL \url{https://doi.org/10.3150/bj/1077544602}.

\bibitem[Younes(2019)]{You19}
L.~Younes.
\newblock \emph{Shapes and diffeomorphisms}, volume 171 of \emph{Applied
  Mathematical Sciences}.
\newblock Springer, Berlin, second edition, 2019.
\newblock ISBN 978-3-662-58495-8; 978-3-662-58496-5.
\newblock \doi{10.1007/978-3-662-58496-5}.
\newblock URL \url{https://doi.org/10.1007/978-3-662-58496-5}.

\end{thebibliography}
\end{document}